\definecolor{codegreen}{rgb}{0,0.6,0}
\definecolor{codegray}{rgb}{0.5,0.5,0.5}
\definecolor{codepurple}{rgb}{0.58,0,0.82}
\definecolor{backcolour}{rgb}{0.95,0.95,0.95}
\lstdefinestyle{mystyle}{
    backgroundcolor=\color{backcolour},   
    commentstyle=\color{codegreen},
    keywordstyle=\color{codegreen},
    numberstyle=\tiny\color{codegray},
    basicstyle=\ttfamily\footnotesize,
    breakatwhitespace=false,         
    breaklines=true,                 
    showspaces=false,                
    showstringspaces=false,
    showtabs=false,                  
    tabsize=2
}
\def\Figref#1{Figure~\ref{#1}}
\def\Tabref#1{Table~\ref{#1}}
\def\Secref#1{Section~\ref{#1}}
\def\eqref#1{equation~\ref{#1}}
\def\Eqref#1{Equation~\ref{#1}}
\def\Appref#1{Appendix~\ref{#1}}
\def\1{\bm{1}}
\def\rvh{{\mathbf{h}}}
\def\vx{{\bm{x}}}
\def\vz{{\bm{z}}}
\def\mA{{\bm{A}}}
\def\mB{{\bm{B}}}
\def\mC{{\bm{C}}}
\def\mD{{\bm{D}}}
\def\mE{{\bm{E}}}
\def\mF{{\bm{F}}}
\def\mG{{\bm{G}}}
\def\mK{{\bm{K}}}
\DeclareMathAlphabet{\mathsfit}{\encodingdefault}{\sfdefault}{m}{sl}
\SetMathAlphabet{\mathsfit}{bold}{\encodingdefault}{\sfdefault}{bx}{n}
\def\gN{{\mathcal{N}}}
\newcommand{\E}{\mathbb{E}}
\newcommand{\R}{\mathbb{R}}
\newcommand{\KL}{D_{\mathrm{KL}}}
\theoremstyle{plain}
\theoremstyle{definition}
\theoremstyle{remark}
\newenvironment{sproof}{%
  \proof}{\endproof}
\definecolor{Gray}{gray}{0.85}
\definecolor{LightCyan}{rgb}{0.88,1,1}
\def\@onedot{\ifx\@let@token.\else.\null\fi\xspace}
\DeclareRobustCommand\onedot{\futurelet\@let@token\@onedot}
\newcommand{\propref}[1]{Proposition~\ref{#1}}
\def\eg{\emph{e.g}\onedot}
\def\ie{\emph{i.e}\onedot}
\newcommand{\mypara}[1]{\textbf{#1}}
\newcommand{\model}{LS4}
\icmltitlerunning{Deep Latent State Space Models for Time-Series Generation}
\begin{document}

\twocolumn[
\icmltitle{Deep Latent State Space Models for Time-Series Generation}



\icmlsetsymbol{equal}{*}

\begin{icmlauthorlist}
\icmlauthor{Linqi Zhou}{stanford}
\icmlauthor{Michael Poli}{stanford}
\icmlauthor{Winnie Xu}{uoft}
\icmlauthor{Stefano Massaroli}{mila}
\icmlauthor{Stefano Ermon}{stanford}
\end{icmlauthorlist}


\icmlaffiliation{stanford}{Stanford University, @stanford.edu}
\icmlaffiliation{uoft}{University of Toronto}
\icmlaffiliation{mila}{MILA}

\icmlcorrespondingauthor{Linqi Zhou}{linqizhou@stanford.edu}
\icmlcorrespondingauthor{Michael Poli}{poli@stanford.edu}

\icmlkeywords{Machine Learning, ICML}

\vskip 0.3in
]



\printAffiliationsAndNotice{\icmlEqualContribution} 

\begin{abstract}

 Methods based on ordinary 
 differential equations 
 (ODEs) 
 are widely used to build generative models of time-series. In addition to high computational overhead due to explicitly computing hidden states recurrence, existing ODE--based models fall short in learning sequence data with sharp transitions -- common in many real-world systems -- due to numerical challenges during optimization. 
In this work, we propose \model{}, a generative model for sequences with latent variables evolving according to a state space ODE to increase modeling capacity. 
  Inspired by recent  deep state space models (S4), we achieve speedups by leveraging a convolutional representation of \model{} which bypasses the explicit evaluation of hidden states. We show that \model{} significantly outperforms previous continuous-time generative models in terms of marginal distribution, classification, and prediction scores on real-world datasets in the Monash Forecasting Repository, and is capable of modeling highly stochastic data with sharp temporal transitions. \model{} sets state--of--the--art for continuous--time latent generative models, with significant improvement of mean squared error and tighter variational lower bounds on irregularly--sampled datasets, while also being $\times 100$ faster than other baselines on long sequences.

\end{abstract}

\section{Introduction}

Time series are a ubiquitous data modality, and find extensive application in weather \citep{hersbach2020era5} engineering disciplines, biology \citep{peng1995quantification}, and finance \citep{poli2019wattnet}.
The main existing approaches for deep generative learning of temporal data can be broadly categorized into autoregressive \citep{oord2016wavenet}, latent autoencoder models \citep{chen2018neural,yildiz2019ode2vae,rubanova2019latent}, normalizing flows \citep{de2020normalizing}, generative adversarial \citep{yoon2019time,yu2022generating,brooks2022generating}, and diffusion \citep{rasul2021autoregressive}. 
Among these, continuous-time methods (often based on underlying ODEs) are the preferred approach for irregularly-sampled sequences as they can predict at arbitrary time steps and can handle sequences of varying lengths. Unfortunately, existing ODE--based methods \citep{rubanova2019latent,yildiz2019ode2vae} often fall short in learning models for real-world data (\eg, temperature and rain data that follow very sharp transition dynamics) due to their limited expressivity and numerical instabilities during backward gradient computation \citep{hochreiter1998vanishing, niesen2004global, zhuang2020adaptive}.

A natural way to increase the flexibility of ODE-based models is to increase the dimensionality of their (deterministic) hidden states. State--of--the--art methods explicitly compute hidden states by unrolling the underlying recurrence over time (each time step parametrized by a neural network), incurring in polynomial computational costs which prevent scaling to longer sequences. 

An alternative approach to increasing modeling capacity is to incorporate \emph{stochastic} latent variables into the model, a highly successful strategy in generative modeling \citep{kingma2013auto, chung2015recurrent, song2020score, ho2020denoising}. However, reference models like latent neural ODE models~\citep{rubanova2019latent} inject stochasticity only at the initial condition of the system.
In contrast, we introduce \model{}, a latent generative model where the sequence of latent variables is represented as the solution of linear state space equations \citep{chen1984linear}. Unrolling the recurrence equation shows an autoregressive dependence in the sequence of latent variables, the joint of which is highly expressive in representing time series distributions. The high dimensional structure of the latent space, being equivalent to that of the input sequence, allows \model{} to learn expressive latent representations and fit the distribution of sequences produced by a \textit{family} of dynamical systems, a common setting resulting from non--stationarity. We further show how \model{} can learn the dynamics of \textit{stiff} \citep{shampine2007stiff} dynamical systems where previous methods fail to do so. Inspired by recent works on deep state space models, or stacks of linear state spaces and non-linearities \citep{gu2020hippo,gu2021efficiently}, we leverage a convolutional kernel representation to solve the recurrence, bypassing any explicit computations of hidden states via the recurrence equation, which ensures log--linear scaling in both the hidden state space dimensionality as well as sequence length. 

We validate our method across a variety of time series datasets, benchmarking \model{} against an extensive set of baselines. We propose a set of $3$ metrics to measure the quality of generated time series samples and show that \model{} performs significantly better than baselines on datasets with stiff transitions and obtains on average $30\%$ lower MSE scores and ELBO. On sequences with $\approx 20K$
lengths, our model trains $\times 100$ faster than the baseline methods.

\section{Related Work}
Rapid progress on deep generative modeling of natural language and images has consolidated diffusion \citep{ho2020denoising,song2020score,song2019generative,sohl2015deep} and autoregressive techniques \citep{brown2020language} as the state--of--the--art. Although various approaches have been proposed for generative modeling of time series and dynamical systems, consensus on the advantages and disadvantages of each method has yet to emerge.

\mypara{Deep generative modeling of sequences.}
All the major paradigms for deep generative modeling have seen applications to time series and sequences. Prior works for time-series generation have adopted VAE-, Flow-, and GAN-based approaches \citep{chung2015recurrent, deng2020modeling, yu2017seqgan, yoon2019time} which utilize recurrent architectures to keep track of internal states. For continuous--time data, other works combine Gaussian processes \citep{fortuin2020gp} or ODEs \citep{yildiz2019ode2vae, rubanova2019latent} into their probabilistic frameworks and show promising results for time-series extrapolation and generation. 
Other works on time-series have adopted diffusion-based frameworks \citep{tashiro2021csdi}, but similar to the methods requiring recurrent computations, these methods suffer from prolonged generation time. 
Among these methods, most relevant to our work are latent continuous--time autoencoder models proposed by \citet{chen2018neural,yildiz2019ode2vae,rubanova2019latent}, where a neural differential equation encoder is used to parameterize as distribution of initial conditions for the decoder. \citet{massaroli2021differentiable} proposes a variant that parallelizes computation in time by casting solving the ODE as a root finding problem. Beyond latent models, other continuous--time approaches are given in \citet{kidger2020neural}, which develops a GAN formulation using SDEs. 

\mypara{State space models.}
\textit{State space models} (SSMs) are at the foundation of dynamical system theory \citep{chen1984linear} and signal processing \citep{oppenheim1999discrete}, and have also been adapted to deep generative modeling. \citet{chung2015recurrent, bayer2014learning} propose VAE variants of discrete--time RNNs, generalized later by \citep{franceschi2020stochastic}, among others. However, these models all unroll the recurrence equation and are thus challenging to scale to longer sequences. 

Our work is inspired by recent advances in deep architectures built as stacks of linear SSMs, notably S4 \citep{gu2021efficiently, gu2020hippo}. 
The HiPPO-initialized and deeply stacked linear SSMs have shown promising results for modeling long sequences with unprecedented efficiency, and they have shown remarkable modeling capacity for capturing long-range dependencies across large time scale. 
Similar to S4, our generative model leverages the convolutional representation of SSMs during training and inference, thus bypassing the need to materialize the hidden state of each recurrence. This allows us to speed up training and inference by a large margin compared to prior generative methods. More importantly, we augment the deep SSMs to model sequential latent variables, which increase the capacity to capture more complex temporal dynamics (\eg stiff transitions) and achieve better generation results.

\section{Preliminaries}

We briefly introduce relevant details of continuous-time SSMs and their different representations. Then we introduce preliminaries of generative models for sequences.

\subsection{State Space Models (SSM)} \label{sec:ssm}

A \textit{single-input single-output (SISO)} linear state space model is defined by the following differential equation
\begin{align}\label{eq:ssm}
    \begin{split}
        \frac{d}{dt}\bm{h}_t &= \mA \bm{h}_t + \mB x_t\\
        y_t &= \mC \bm{h}_t + \mD x_t
    \end{split}
\end{align}

with scalar \textit{input} $x_t\in\R$, \textit{state} $\bm{h}_t\in\R^N$ and scalar \textit{output} $y_t\in\R$. The system is fully characterized by the matrices $\mA \in \R^{N\times N}, \mB \in \R^{N\times 1}, \mC \in \R^{1\times N}, \mD \in \R^{1\times 1}$. Let $x, y\in \mathcal C([a, b], \R)$ be absolutely continuous real signals on time interval $[a, b]$. Given an initial condition $\bm{h}_0\in\R^N$ the SSM (\ref{eq:ssm}) realizes a mapping $x\mapsto y$.

SSMs are a common tool for processing continuous input signals. We consider \textit{single input single output} (SISO) SSMs, noting that input sequences with more than a single channel can be processed by applying multiple SISO SSMs in parallel, similarly to regular convolutional layers. 
We use such SSMs as building blocks to map each input dimension to each output dimension in our generative model.

\mypara{Discrete recurrent representation.} 
In practice, continuous input signals are often sampled at time interval $\Delta$ and the sampled sequence is represented by $x=(x_{t_0}, x_{t_1}, \dots, x_{t_L})$ where $t_{k+1} =t_{k}+ \Delta$. The discretized SSM follows the recurrence
\begin{align}\label{eq:dis-ssm}
\begin{split}
    \bm{h}_{t_{k+1}} &= \bar{\mA } \bm{h}_{t_k} + \bar{\mB } x_{t_k}\\
    y_{t_k} &= \mC \bm{h}_{t_k} + \mD x_{t_k}
\end{split}
\end{align}
where $\bar{\mA } = e^{\mA \Delta}$, $\bar{\mB } = \mA ^{-1}(e^{\mA \Delta}-I)\mB$ with the assumption that signals are constant during the sampling interval. 

Among many approaches to efficiently computing $e^{\mA \Delta}$, \citet{gu2021efficiently} use a bilinear transform to estimate $e^{\mA \Delta} \approx (I - \frac{1}{2}\mA \Delta)^{-1}(I + \frac{1}{2}\mA \Delta)$.

This recurrence equation can be used to iteratively solve for the next hidden state $h_{t_{k+1}}$, allowing the states to be calculated like an RNN or a Neural ODE \citep{chen2018neural,massaroli2020dissecting}. 

\mypara{Convolutional representation.} Recurrent representations of SSM are not practical in training because explicit calculation of hidden states for every time step requires $\mathcal{O}(N^2L)$ in time and $\mathcal{O}(NL)$ in space for a sequence of length $L$\footnote{Further explanations in \Appref{app:der-recurr}}. This materialization of hidden states significantly restricts RNN-based methods in scaling to long sequences. To efficiently train an SSM, the recurrence equation can be fully unrolled, assuming zero initial hidden states, as
\begin{align*}
   &\bm{h}_{t_0} = \bar{\mB } x_{t_0}  \hspace{5pc}  \bm{h}_{t_1} = \bar{\mA }\bar{\mB } x_{t_1} + \bar{\mB } x_{t_0}\\
    &y_{t_0} = \mC \bar{\mB } x_{t_0} \hspace{4.5pc} y_{t_1} = \mC \bar{\mA }\bar{\mB } x_{t_1} + \mC \bar{\mB } x_{t_0}\\[3mm]
      &\bm{h}_{t_2} = \bar{\mA }^{2} \bar{\mB } x_{t_2} +  \bar{\mA }\bar{\mB } x_{t_1} + \bar{\mB } x_{t_0} \hspace{4pc} \dots\\
 &y_{t_2} = \mC \bar{\mA }^{2} \bar{\mB } x_{t_2} +  \mC \bar{\mA }\bar{\mB } x_{t_1} + \mC \bar{\mB } x_{t_0}  \hspace{2pc} \dots
\end{align*}
and more generally as, 
\begin{align*}
    y_{t_k} &= \mC \bar{\mA }^{k} \bar{\mB } x_{t_k} +  \mC \bar{\mA }^{k-1}\bar{\mB } x_{k-1} + \dots + \mC \bar{\mB } x_{t_0}
\end{align*}
For an input sequence $x = (x_{t_0}, x_{t_1}, \dots, x_{t_L})$, one can observe that the output sequence $y=(y_{t_0}, y_{t_1}, \dots, y_{t_L})$ can be computed using a convolution with a skip connection
\begin{align}\label{eq:conv-ssm}
    y =\mC  &\mK * x + \mD x,\\ &\text{ where } \mK = ( \bar{\mB }, \bar{\mA }\bar{\mB }, \dots, \bar{\mA }^{L-1}\bar{\mB },
\bar{\mA }^{L}\bar{\mB })\nonumber
\end{align}

This is the well-known connection between SSM and convolution \citep{oppenheim1975digital, chen1984linear, chilkuri2021parallelizing, romero2021ckconv, gu2020hippo, gu2021efficiently, gu2022parameterization} 

and it can be computed very efficiently with a Fast Fourier Transform (FFT), which scales better than explicit matrix multiplication at each step.

\subsection{Variational Autoencoder (VAE)} \label{sec:vae}

VAEs \citep{kingma2013auto, burda2015importance} are a highly successful paradigm in learning latent representations of high dimensional data and is remarkably capable at modeling complex distributions. A VAE introduces a joint probability distribution between a latent variable $\vz$ and an observed random variable $\vx$ of the form $$p_{\theta}(\vx, \vz) = p_{\theta}(\vx \mid \vz)p(\vz)$$ where $\theta$ represents learnable parameters.

The prior $p(\vz)$ over the latent is usually chosen as a standard Gaussian distribution, and the conditional distribution $p_{\theta}(\vx \mid \vz)$ is defined through a flexible non-linear mapping (such as a neural network) taking $\vz$ as input. 

Such highly flexible non-linear mappings often lead to an intractable posterior $p_\theta(\vz \mid \vx)$. Therefore, an inference model with parameters $\phi$ parametrizing $q_{\phi}(\vz \mid \vx)$ is introduced as an approximation which allows learning through a variational lower bound of the marginal likelihood:
\begin{align}
    \log p_\theta(\vx) \ge -\KL(&q_{\phi}(\vz \mid \vx) \| p(\vz)) \\ \nonumber
        &+ \E_{q_{\phi}(\vz \mid \vx)}\left[\log p_{\theta}(\vx \mid \vz)\right]
\end{align}
where $\KL(\cdot \| \cdot)$ is the Kullback-Leibler divergence between two distributions.

\mypara{VAE for sequences.} Sequence data can be modeled in many different ways since the latent space can be chosen to encode information at different levels of granularity, \ie $\vz$ can be a single variable encoding entire trajectories or a sequence of variables of the same length as the trajectories. We focus on the latter.

Given observed sequence variables $\vx_{\le T}$ up to time $T$ discretized into sequence $(\vx_{t_0}, \dots, \vx_{t_{L-1}})$ of length $L$ where $t_{L-1} = T$, a sequence VAE model with parameters $\theta, \lambda, \phi$ learns a generative and inference distribution 
\resizebox{1.02\linewidth}{!}{
  \begin{minipage}{\linewidth}
\begin{align*}
    &p_{\theta, \lambda}(\vx_{\le  t_{L-1}}, \vz_{\le  t_{L-1}})=\prod_{i=0}^{L-1} p_\theta(\vx_{t_i} \mid \vx_{< t_i}, \vz_{\le t_i})p_\lambda(\vz_t \mid \vz_{<t_i})\\
    &q_\phi(\vz_{\le t_{L-1}} \mid \vx_{\le t_{L-1}}) = \prod_{i=0}^{L-1} q_\phi(\vz_{t_i} \mid \vx_{\le t_{t_i}})
\end{align*}
  \end{minipage}
}
where $\vz_{\le t_{L-1}} = (\vz_{t_0}, \dots, \vz_{t_{L-1}})$ is the corresponding latent variable sequence. The approximate posterior $q_\phi$ is explicitly factorized as a product of marginals due to efficiency reasons we shall discuss in the next section. Given this form of factorization, the variational lowerbound has been considered for discrete sequence data \citep{chung2015recurrent} by minimizing the objective
\resizebox{0.98\linewidth}{!}{
  \begin{minipage}{\linewidth}
\begin{align}
\begin{split}
    \E_{q_\phi(\vz_{\le t_{L-1}} \mid \vx_{\le t_{L-1}})} \Big[ &\sum_{i=0}^{L-1}  \KL(q_\phi(\vz_{t_i} \mid \vx_{\le t_i}) \| p_\lambda(\vz_i \mid \vz_{<t_i})) \\&- \log p_\theta(\vx_{t_i} \mid \vx_{< {t_i}} \vz_{\le {t_i}}) \Big]
    \end{split}\label{eq:seq-elbo}\raisetag{15pt}
\end{align}
  \end{minipage}
}

Our model also trains by this objective. After training, the generative model can then sample $z_t$ from the prior $p_{\lambda}$ autoregressively and given the sampled $z_t$, each $x_t$ can be sampled autoregressively using $p_\theta(\vx_{t_i} \mid \vx_{< {t_i}} \vz_{\le {t_i}})$. 
\section{Method}

In this section, we introduce \textit{Latent S4} (\model{}), a latent variable generative model parameterized using SSMs. We show how SSMs can parametrize the generative distribution $p_\theta(\vx_{\le T} | \vz_{\le T}) p_\lambda(\vz_{\le T})$, the prior distribution $p_\lambda (\vz_{\le T})$ and the inference distribution $q_\phi(\vz_{\le T} \mid \vx_{\le T})$ effectively. 
For the purpose of exposition, we can assume $z_t, x_t$ are scalars at any time step $t$. Their generalization to arbitrary dimensions is discussed in \Secref{sec:prop}.

We first define a structured state space model with two input streams and use this as a building block for our generative model. It is an SSM of the form
\begin{align*}\label{eq:multi-ssm}
        \frac{d}{dt}\bm{h}_t &= \mA \bm{h}_t + \mB x_t + \mE z_t\\
        y_t &= \mC \bm{h}_t + \mD x_t + \mF z_t
\end{align*}
where $x, y, z\in \mathcal C([0,T], \R)$ are continuous real signals on time interval $[0,T]$. We denote $H(x,z,\mA,\mB,\mE,\bm{h}_0,t) =H_\beta(x,z,\bm{h}_0,t) $, where $\beta$ denotes trainable parameters $\mA, \mB,\mE$, as the deterministic function mapping from signals $x,z$ to $\bm{h}_t$, the state at time $t$, given initial state $\bm{h}_0$ at time $0$.
The above SSM can be compactly represented by 
\begin{align}
    y_t = \mC H_\beta(x,z,\bm{h}_0,t) + \mD x_t + \mF z_t
\end{align}
When the continuous-time input signals are discretized into discrete-time sequences $(x_{t_0}, \dots, x_{t_{L-1}})$ and $(z_{t_0}, \dots, z_{t_{L}})$, the corresponding hidden state at time $t_k$ has a convolutional view (assuming $\mD=\mF=\bm{0}$ for simplicity)
\begin{align}\label{eq:conv-multi-ssm}
\begin{split}
    y_{t_k} = \mC&\mK_{t_{k}} * x_{t_k} +  \mC \hat\mK_{t_{k}} * z_{t_k}, \\&\text{ where }\; \mK_{t_{k}} = \bar{\mA}^{k}\bar{\mB},~~\hat\mK_{t_{k}} = \bar{\mA}^{k}\bar{\mE} \nonumber
\end{split}
\end{align}
which can be evaluated efficiently using FFT. Additionally, $\mA$ is HiPPO-initialized \citep{gu2021efficiently} for all such SSM blocks.

\subsection{Latent Space as Structured State Space}
The goal of the prior model is to realize a sequence of random variables $(z_{t_0}, z_{t_1}, \dots, z_{t_{L}})$, which the prior distribution $p_\lambda (z_{\le t_{L}})$ models autoregressively. 
Suppose $(z_{t_0}, z_{t_1}, \dots, z_{t_n})$ is a sequence of latent variables up to time $t_n$, we define the prior distribution of $z_{t_n}$ autoregressively as 
\begin{align}
p_\lambda(z_{t_n} \mid z_{<t_n}) = \gN(\mu_{z,n}(z_{<t_n},\lambda), \text{$\sigma_{z,n}^2(z_{<t_n},\lambda)$}) 
\end{align}
where the mean $\mu_{z,n}$ and standard deviation $\sigma_{z,n}$ are deterministic functions of previously generated $z_{<t_n}$ parameterized by $\lambda$. To parameterize the above distribution, we first define an intermediate building block, a stack of which will produce the wanted distribution.

\mypara{\model{} prior block.} The forward pass through our SSM is a two--step procedure: first, we consider the latent dynamics of $z$ on $[t_0, t_{n-1}]$ where we simply leverage \Eqref{eq:multi-ssm} to define the hidden states to follow $H_{\beta_1} (0,z,0,t)$. Second, on $(t_{n-1}, t_n]$, since no additional $z$ is available in this interval, we ignore additional input signals in the ODE and only include the last given latent, \ie $z_{t_{n-1}}$, as an auxiliary signal for the outputs, which can be compactly denoted, with a final GELU non-linearity, as
\begin{align}\label{eq:prior_layer}
    \begin{split}
        y_{z, n} &= \text{GELU}(\mF_{y_z} z_{t_{n-1}} \\& +
        \mC_{y_z} H_{\beta_1}(0,0,\underbrace{H_{\beta_2}(0,z_{[t_0,t_{n-1}]},h_{t_{n-1}},\mathbf{0},t_{n-1})}_{\bm{h}_{t_{n-1}}},t_n) )
    \end{split}
\end{align}
 Output $y_{z, n}$ has the same dimensionality as each $z_t$ and is a function of all $z_{<t_n}$ and we will use it to build towards modeling the distribution of $z_{t_n}$. We call the above equation \textit{\model{} prior layer} and we define below our \textit{\model{} prior block}, which is built upon a ResNet structure with a skip connection, denoted as
\begin{align}
\begin{split}
    \text{\model{}}_{\text{prior}}( z_{[t_0,t_{n-1}]}, \psi) = \text{LN}(\mG_{y_z} y_{z, n} + b_{y_z}) + z_{t_{n-1}}
\end{split}
\end{align}
where LN denotes LayerNorm and $\psi$ denotes the union of parameters $\beta_i, \mC_{y_z}, \mF_{y_z}, \mG_{y_z}, b_{y_z}$. We define the final parameters $\mu_{z,n}$ and $\sigma_{z,n}$ for the conditional distribution in the autoregressive model as the result of a stack of \textit{\model{} prior blocks}. Specifically, the input $z_t$'s are input into a stack of $B$ blocks and at the final layer two separate blocks branch out to separately parameterize $\mu_{z,n}$ and $\sigma_{z,n}$. During generation, as an initial condition, $z_{t_0} \sim \gN(\mu_{z,0}, \sigma_{z,0}^2)$ where $\mu_{z,0},  \sigma_{z,0}$ are learnable parameters, and subsequent latent variables are generated autoregressively. We specify our architecture in \Appref{app:arch} and use $\lambda$ to denote the union of all trainable parameters.

\subsection{Generative Model}
 Given the latent variables, we now specify a decoder that represents the distribution $p_\theta(x_{\le t_{L}} | z_{\le  t_{L}})$. Suppose $z_{\le t_{L}}$ is a latent path generated via the latent state space model, the output path $x_{\le t_{L}}$ also follows the state space formulation. Assuming we have generated $(x_{t_0}, \dots, x_{t_{n-1}})$ and $(z_{t_0}, \dots, z_{t_n})$, the conditional distribution of $x_{t_n}$ is parametrized as
\begin{align}
    p_\theta(x_{t_n} | x_{<t_n}, z_{\le t_n}) = \gN(\mu_{x,n}(x_{<t_n}, z_{\le t_n}, \theta), \sigma_{x}^2)
\end{align}
where $\sigma_{x}$ is a pre-defined observation standard deviation and $\mu_{x,n}$ is a deterministic function of $z_{\le t_n}$ and $x_{<x_n}$. 

\mypara{\model{} generative block.} Different from the prior block, both observation and latent sequences are input into our model, and we define intermediate outputs $g_{x, n}$ and $g_{z,n}$ as
\begin{align}
    \begin{split}
        h_{t_n} = &H_{\beta_3}(0,z_{t_{n-1}},H_{\beta_4}(x_{[t_0,t_{n-1}]},z_{[t_0,t_{n-1}]},\mathbf{0},t_{n-1}),t_n)\\
        &g_{x, n} =\text{GELU}(\mC_{g_x} h_{t_n} + \mD_{g_x} x_{t_{n-1}} + \mF_{g_x} z_{t_{n}})  \\
        &g_{z, n} =\text{GELU}(\mC_{g_z} h_{t_n}  + \mD_{g_z} x_{t_{n-1}} + \mF_{g_z} z_{t_{n}})
    \end{split}
\end{align}
which are used to build a \textit{\model{} generative block} defined as
\begin{align}
\begin{split}
        &\hat{g}_{x, n} = \text{LN}(\mG_{g_x} g_{x, n} + b_{g_x}) + x_{t_{n-1}}\\
    &\hat{g}_{z, n} = \text{LN}(\mG_{g_z} g_{z, n} + b_{g_z}) + z_{t_{n}}\\
    &\text{\model{}}_{\text{gen}}( x_{[t_0,t_{n-1}]}, z_{[t_0,t_{n}]}, \psi) = (\hat{g}_{x, n}, \hat{g}_{z, n})
\end{split}
\end{align}
where $\psi$ denotes all parameters inside the block. Note that the generative block gives two streams of outputs each having the same ResNet-like structure as in the prior model, and the output of our generative block can be used as inputs for the next stack. We then define the final mean value $\mu_{x,n}$ as the result of a stack of \textit{\model{} generative blocks}. The initial condition for generation is given as $x_{t_0}\sim \gN(\mu_{x,0}(z_0, \theta), \sigma_x)$ where $\mu_{x, 0}$ exactly follows our formulation while taking only $z_{t_0}$ as input. The subsequent $x_{t_n}$'s are generated autoregressively. We specify our architecture in \Appref{app:arch} and use $\theta$ to denote the union of all trainable parameters.

\subsection{Inference model}
The latent variable model up to time $t_n$ has intractable posterior $p_\theta(z_{\le t_n} \mid x_{\le t_n})$. Therefore, we approximate this distribution with $q_\phi(z_{\le t_n} \mid x_{\le t_n})$ using variational inference. 

We parameterize the inference distribution at time $t_n$ to depend only on the observed path $x_{\le t_n}$:
\begin{align}
    q_\phi(z_t \mid x_{\le t_n}) = \gN(\hat{\mu}_{z,t_n}(x_{\le t_n}, \phi), \text{$\hat{\sigma}_{z,t_n}^2(x_{\le t_n}, \phi)$})   
\end{align}
This choice of dependency is as noted in our objective (\Eqref{eq:seq-elbo}). By having each $z_t$ explicitly depending on $x_{\le t_n}$ only, we obviate the need for explicit recurrence to obtain $z_{t_n}$. We can then leverage the fast convolution operation to obtain all $z_t$ in parallel, thus achieving fast inference time, in contrast to the autoregressive nature of the prior and generative model.

\mypara{\model{} inference block.} The inference block is defined as 
\begin{align}
    \begin{split}
    &\hat{y}_{z, n} = \text{GELU}(\mC_{\hat{y}_z} H_{\beta_5}(x_{[t_0,t_{n}]},0,\mathbf{0},t_{n-1}) + \mD_{\hat{y}_z} x_{t_{n}})\\
        &\text{\model{}}_{\text{inf}}( x_{[t_0,t_{n}]}, \psi) = \text{LN}(\mG_{\hat{y}_z} \hat{y}_{z, n} + b_{\hat{y}_z}) + x_{t_{n}}
    \end{split}
\end{align}
Notice that input $x$ is fully present in $[t_{0}, t_n]$ unlike in the generative model. Similar to the prior counterparts, $\hat{\mu}_{z,t}$ and $\hat{\sigma}_{z,t}$ are obtained by first feeding $x_t$'s into a stack of inference blocks where the final block branches out to separately model the mean and variance. Due to the convolutional nature of our inference model, the training and inference can be done very efficiently, as will be demonstrated in the next and the experiment section.

\subsection{\model{}: Properties and Practice}\label{sec:prop}

We highlight some properties of \model{}. In particular, we compare in the following proposition the expressive power of our generative model against structured state space models.
\begin{restatable}[]{proposition}{subsume}\label{prop:subsume} (LS4 subsumes S4.)
Given any autoregressive model $r(x)$ with conditionals $r(x_n | x_{<n})$ parameterized via deep S4 models, there exists a choice of $\theta, \lambda, \phi$ such that $p_{\theta, \lambda}(x) = r(x)$ and $p_{\theta, \lambda}(z|x) = q_\phi(z|x)$, i.e. the variational lower bound (ELBO) is tight.
\end{restatable}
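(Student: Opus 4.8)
The plan is to show that the LS4 generative model can exactly emulate an S4-based autoregressive model $r(x)$ by (i) making the latent path a deterministic copy of the observed path, so that the ELBO gap vanishes, and (ii) routing the S4 computation of $r(x_n \mid x_{<n})$ through the LS4 generative block. Concretely, I would pick the inference distribution $q_\phi(z_{t_n} \mid x_{\le t_n})$ to be (a vanishingly small-variance Gaussian around) $z_{t_n} = x_{t_n}$ — i.e.\ the inference network is the identity map on the observed sequence, which is representable since the LS4 inference block contains a skip connection $+\,x_{t_n}$ and we can zero out the SSM contribution by choosing $\mC_{\hat y_z}, \mD_{\hat y_z}$ (or the block weights $\mG_{\hat y_z}$) to be zero. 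Then I would choose the prior $p_\lambda(z_{t_n} \mid z_{<t_n})$ to be the \emph{same} conditional as $r(x_{t_n} \mid x_{<t_n})$, which is possible because the LS4 prior block is itself built from the two-stream SSM of \Eqref{eq:multi-ssm}, which subsumes a plain (single-stream) S4 layer by setting the $z$-input matrices $\mE,\mF$ to zero; stacking these recovers any deep S4 model, so $\mu_{z,n},\sigma_{z,n}$ can be made to equal the S4 mean/variance of $r$. Finally I would choose the decoder $p_\theta(x_{t_n} \mid x_{<t_n}, z_{\le t_n})$ to ignore $x_{<t_n}$ and simply output $x_{t_n} = z_{t_n}$ deterministically (again using the skip connection $+\, x_{t_{n-1}}$... — more carefully, using the $\mF_{g_x} z_{t_n}$ term with all other weights zeroed, followed by a trivial stack), i.e.\ $p_\theta(x_{t_n}\mid \cdot) = \delta(x_{t_n}-z_{t_n})$, or a Gaussian with the prescribed $\sigma_x$ centered at $z_{t_n}$.

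With these choices, I would verify the two claimed equalities. For $p_{\theta,\lambda}(x) = r(x)$: marginalizing out $z$, the decoder forces $z_{t_n} = x_{t_n}$ for all $n$, so the joint collapses to $\prod_n p_\lambda(z_{t_n}=x_{t_n} \mid z_{<t_n}=x_{<t_n}) = \prod_n r(x_{t_n}\mid x_{<t_n}) = r(x)$ (in the exact-delta limit; with Gaussian observation noise one argues in the $\sigma$-limit or notes the construction is exact when $r$'s conditionals are themselves the Gaussians S4 produces). For tightness of the ELBO, recall the bound is tight iff $q_\phi(z_{\le t_n}\mid x_{\le t_n}) = p_\theta(z_{\le t_n}\mid x_{\le t_n})$; here the true posterior is degenerate at $z = x$ (since the decoder is deterministic $x = z$), and $q_\phi$ was chosen to be exactly that degenerate distribution, so $\KL(q_\phi \| p_\theta(\cdot\mid x)) = 0$ and ELBO $= \log p_{\theta,\lambda}(x) = \log r(x)$. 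I would also check the factorization assumption on $q_\phi$ (product of marginals) is not violated — it isn't, since $z_{t_n} = x_{t_n}$ depends only on the single observation $x_{t_n} \in x_{\le t_n}$.

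The main obstacle I anticipate is the \emph{architectural bookkeeping}: one must check that the LS4 prior/generative/inference blocks, with their specific GELU nonlinearities, LayerNorms, skip connections, and the two-step hidden-state composition $H_{\beta_1}(0,0,H_{\beta_2}(\cdots),t_n)$, can actually realize (a) the identity map and (b) an arbitrary deep S4 layer \emph{through} the $z$-stream, despite these "decorations." The identity map is reachable by zeroing the post-block weight $\mG$ so only the skip survives; an S4 layer is reachable because setting $\mE = \mF = \vzero$ in \Eqref{eq:multi-ssm} reduces the two-input SSM to a standard SSM, and the GELU/LayerNorm/skip wrapper around it is exactly the S4 residual block, so a stack reproduces deep S4 verbatim — but this needs to be stated carefully, in particular handling the GELU (S4 uses the same, or one absorbs it into the definition of "S4 model") and the LayerNorm (which can be made affine-trivial, or is already part of the S4 block being emulated). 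A secondary subtlety is the delta-function decoder versus the fixed-variance Gaussian $p_\theta(x_{t_n}\mid\cdot) = \gN(\mu_{x,n},\sigma_x^2)$: since $\sigma_x$ is pre-specified and nonzero, "$p_{\theta,\lambda}(x) = r(x)$" should be read either in the limit, or under the (natural) reading that $r$'s S4 conditionals are themselves Gaussian with that same $\sigma_x$ — I would state this convention explicitly at the start of the proof.
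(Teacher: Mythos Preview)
Your construction is valid in the $\sigma_x \to 0$ limit but takes a different route from the paper. The paper's argument is the reverse of yours: it makes the \emph{decoder} ignore $z$ entirely (zero out $\mE,\mF$ in the generative SSM so that $p_\theta(x \mid z) = r(x)$ for every $z$; this is the content of \propref{prop:expressivity}), which forces the true posterior $p_{\theta,\lambda}(z\mid x)$ to equal the prior $p_\lambda(z)$, and then chooses both prior and approximate posterior to be $\mathcal{N}(0,I)$. You instead make the latent a copy of the observation, push the S4 computation into the \emph{prior}, and make the decoder the identity $x \leftarrow z$. The paper's route buys exactness without limits: under the natural reading that $r$'s conditionals are Gaussians with the prescribed variance $\sigma_x^2$, one has $p_\theta(x\mid z)=r(x)$ on the nose, and there is no constraint tying the latent and data dimensions.

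Two small corrections to your version. Your parenthetical that the construction is ``exact when $r$'s conditionals are themselves the Gaussians S4 produces'' does not hold: with decoder $\mathcal{N}(z_{t_n},\sigma_x^2)$ and prior $r(z_{t_n}\mid z_{<t_n})=\mathcal{N}(m(z_{<t_n}),\sigma_x^2)$, the marginal $p_{\theta,\lambda}(x)$ is $r$ convolved with extra noise and its autoregressive dependence runs through $z_{<t_n}$ rather than $x_{<t_n}$, so only the $\sigma_x\to 0$ limit recovers $r(x)$ and a degenerate posterior (hence a tight ELBO). And when you argue the prior block reduces to S4 ``by setting the $z$-input matrices $\mE,\mF$ to zero'': in the prior block the $x$-stream is already absent, so zeroing $\mE,\mF$ would remove the only input; what you actually need is simply that the prior block is already a single-stream SSM on $z$, with $\mE$ playing the role of S4's $\mB$.
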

A proof sketch is provided in \Appref{app:proof}. 
This result shows that \model{} subsumes autoregressive generative models based on vanilla S4 \citep{gu2021efficiently}, given that the architecture between SSM layers is the same. Crucially, with the assumption that we are able to globally optimize the ELBO training objective, LS4 will fit the data at least as well as vanilla S4.

\mypara{Scaling to arbitrary feature dimensions.} 
So far we have assumed the input and latent signals are real numbers. The approach can be scaled to arbitrary dimensions of inputs and latents by constructing \model{} layers for each dimension which are input into a mixing linear layer. We call such parallelized SSMs \textit{heads} and provide a pseudo-code in \Appref{app:arch}.

\begin{restatable}[]{proposition}{efficiency}\label{prop:efficiency}
    (Efficiency.) For a SSM with $H$ heads, an observation sequence of length $L$ and hidden dimension $N$ can be calculated in $\mathcal{O}(H(L+N)\log(L+N))$ time and $\mathcal{O}(H(L+N))$ space. 
\end{restatable}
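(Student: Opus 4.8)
The plan is to bound the cost of one full forward pass of \model{} --- which at training time means evaluating the inference blocks on $x_{\le t_L}$, then the teacher--forced prior blocks on the sampled $z_{\le t_L}$, then the teacher--forced generative blocks on $(x_{\le t_L}, z_{\le t_L})$ --- by decomposing every block into just two kinds of primitives: (i) structured SSM maps of the form $x \mapsto \mC \bar{\mA}^{\,\bullet}\bar{\mB} * x$ (together with the analogous maps driven by $\bar{\mE}$, and the one--step--shifted variant used in \Eqref{eq:prior_layer}), evaluated through the convolutional representation of \Eqref{eq:conv-ssm}--\Eqref{eq:conv-multi-ssm}; and (ii) position--wise maps (GELU, LayerNorm, the residual additions, the affine layers $\mG_\bullet(\cdot)+b_\bullet$, and the linear head--mixing layer). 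I would bound each primitive type separately and then sum over the $H$ heads and over the constant depth of each block stack.

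For primitive (i): since every SSM in the model is HiPPO--initialized, $\mA$ is diagonal--plus--low--rank, and the S4 algorithm of \citet{gu2021efficiently} materializes the length--$L$ kernel $\mK = (\mC\bar{\mB}, \mC\bar{\mA}\bar{\mB}, \dots, \mC\bar{\mA}^{L-1}\bar{\mB})$ --- via the truncated generating function / Cauchy--kernel reduction and FFTs --- in $\mathcal{O}\big((N+L)\log(N+L)\big)$ time and $\mathcal{O}(N+L)$ space, in contrast to the $\mathcal{O}(N^2 L)$ of unrolling the recurrence (\Appref{app:der-recurr}). A two--stream block (\Eqref{eq:multi-ssm}) needs only a constant number of such kernels ($\mK$ and $\hat{\mK}$), and the two--step hidden--state construction of \Eqref{eq:prior_layer} (propagate $z$ on $[t_0,t_{n-1}]$, then advance one step) amounts to the shifted kernel $(\mC\bar{\mA}\bar{\mE}, \mC\bar{\mA}^2\bar{\mE}, \dots)$, which stays within the same budget because $\bar{\mA}$ and its powers preserve the diagonal--plus--low--rank structure. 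Convolving each kernel with the length--$L$ input is one more FFT pair, $\mathcal{O}(L\log L)$ time and $\mathcal{O}(L)$ space. Hence a single SSM layer, for a single head, costs $\mathcal{O}\big((N+L)\log(N+L)\big)$ time and $\mathcal{O}(N+L)$ space.

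For primitive (ii): GELU, LayerNorm, the skip connections and the affine maps all act coordinate--wise along the sequence axis on features of bounded per--head width, costing $\mathcal{O}(L)$ time and space each, while the head--mixing layer combines the $H$ head outputs at each of the $L$ positions, contributing $\mathcal{O}(HL)$ (taking the mixing width to be $\mathcal{O}(H)$); all of this is dominated by the SSM term. Summing over the $H$ independent heads, one block costs $\mathcal{O}\big(H(N+L)\log(N+L)\big)$ time and $\mathcal{O}\big(H(N+L)\big)$ space; a stack of a constant number of blocks, and the three constant--depth sub--networks (inference, prior, generative), multiply this only by a constant, and the workspace can be reused across layers so the space bound does not accumulate. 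This yields the claimed $\mathcal{O}\big(H(L+N)\log(L+N)\big)$ time and $\mathcal{O}\big(H(L+N)\big)$ space.

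The main obstacle --- indeed the only nontrivial ingredient --- is the kernel bound inside primitive (i): that $\mK$ can be formed in near--linear rather than $\mathcal{O}(N^2L)$ time. This is exactly the algorithmic content of S4, which I would invoke as a black box; what remains is the routine bookkeeping that each of our blocks (prior \Eqref{eq:prior_layer}, generative, inference, and the scaling to $H$ heads sketched in \Appref{app:arch}) really does reduce to $\mathcal{O}(1)$ such kernels plus $\mathcal{O}(1)$ FFT convolutions plus position--wise maps. I would also flag explicitly that the bound concerns the training--time forward pass: at sampling time the prior and generative models run autoregressively, so the convolutional view --- and hence the near--linear estimate --- applies as stated only when the full sequences $x_{\le t_L}, z_{\le t_L}$ are on hand, as in the ELBO computation of \Eqref{eq:seq-elbo}.
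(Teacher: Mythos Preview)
Your proposal is correct and rests on exactly the same core argument as the paper: invoke the S4 result that, for a HiPPO/diagonal--plus--low--rank $\mA$, the length--$L$ convolution kernel can be materialized in $\mathcal{O}((L+N)\log(L+N))$ time and $\mathcal{O}(L+N)$ space, add the FFT cost $\mathcal{O}(L\log L)$ for the convolution itself, and scale linearly by the $H$ independent heads. You go further than the paper --- which proves the bound only for a single multi--head SSM layer --- by also accounting for the position--wise maps, the block structure, and the training--vs--sampling distinction; this extra bookkeeping is sound (modulo the head--mixing linear layer, which is $\mathcal{O}(H^2L)$ rather than $\mathcal{O}(HL)$, but that operation is outside the scope of the proposition as stated).
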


We provide proof in \Appref{app:proof}. Note that our model is much more efficient in both time and space than RNN/ODE-based methods (which requires $\mathcal{O}(N^2 L)$ in time and $\mathcal{O}(NL)$ in space as discussed in \Secref{sec:ssm}). To demonstrate the computation efficiency, we additionally provide below pseudo-code for a single \model{} prior layer \ref{eq:prior_layer}. The other blocks can be similarly constructed. 
\begin{lstlisting}[language=Python]
  def (*@\textcolor{blue}{LS4\_prior\_layer}@*)(z,  A, B, C, F, h_0): 
     # z: (B, L, 1) 
     K = C @ materialize_kernel(z, A, B, h_0) # O((L+N)log(L+N)) time
     CH = fft_conv(K, z) 
            # O(LlogL) time and O(L) space
     y = gelu(CH + F * z) 
     return y
\end{lstlisting}
Note that in practice, $\mA$ is HiPPO initialized \citep{gu2020hippo} and the materialized kernel includes $\mC$ so that the convolution is computed directly in the projected space, bypassing materializing the high-dimensional hidden states. 

\mypara{Parametrizing the initial conditions for generation.} In the previous sections we mentioned that generation starts by sampling with learnable parameters $\mu_{\boldsymbol{\cdot},0}$ and $\sigma_{\boldsymbol{\cdot},0}$. These parameters can be instead given as the output from the same networks that produce each $\mu_{\boldsymbol{\cdot},<0}$ and $\sigma_{\boldsymbol{\cdot},<0}$ with input as either a learnable parameter or simply the zero vector.

\begin{table*}[t]
\footnotesize
\resizebox{\linewidth}{!}{
    \begin{tabular}{ll|cccccccc}
    \toprule
    Data & Metric & RNN-VAE & GP-VAE & ODE$^2$VAE & Latent ODE & TimeGAN & SDEGAN & SaShiMi & \textbf{\model{} (Ours)} \\
    \midrule
    FRED-MD & Marginal $\downarrow$ & 0.132  & 0.152 & 0.122& 0.0416& 0.0813& 0.0841& 0.0482 &  \cellcolor{blue!10}\textbf{0.0221} \\
    & Class. $\uparrow$ & 0.0362 & 0.0158 & 0.0282 & 0.327 & 0.0294 &0.501 &0.00119 & \cellcolor{blue!10}\textbf{0.544} \\
    & Prediction $\downarrow$ &  1.47 & 2.05 &  0.567 & \cellcolor{blue!10}\textbf{0.0132} & 0.0575 & 0.677  & 0.232 & 0.0373\\
    \midrule
    NN5 Daily & Marginal $\downarrow$ & 0.137 &  0.117&  0.211 &  0.107   & 0.0396  &  0.0852  &0.0199 &  \cellcolor{blue!10}\textbf{0.00671}\\
    & Class. $\uparrow$ & 0.000339 &  0.00246 &  0.00102 & 0.000381& 0.00160& 0.0852  & 0.0446 & \cellcolor{blue!10}\textbf{0.636}  \\
    & Prediction $\downarrow$ &0.967   & 1.169  & 1.19  &1.04 &  1.34 &  1.01 & 0.849 & \cellcolor{blue!10}\textbf{0.241} \\
    \midrule 
    Temp Rain & Marginal $\downarrow$ &0.0174   &   0.183  &  1.831 & \cellcolor{blue!10}\textbf{0.0106}  &  0.498 &  0.990 &0.758 &  0.0834 \\
    & Class. $\uparrow$& 0.00000212 & 0.0000123 & 0.0000319& 0.0000419&0.00271  &  0.0169  & 0.0000167 & \cellcolor{blue!10}\textbf{0.976} \\
    & Prediction $\downarrow$ &  159  &  2.305 & 1.133   & 145 &  1.96   & 2.46 & 2.12  & \cellcolor{blue!10}\textbf{0.521}  \\
    \midrule 
    Solar Weekly & Marginal $\downarrow$ & 0.0903&0.308    &  0.153 &0.0853  &  0.0496  & 0.147  & 0.173 &\cellcolor{blue!10}\textbf{0.0459} \\
    & Class. $\uparrow$ &   0.0524   & 0.000731 &0.0998  & 0.0521 &0.6489 &  0.591 &0.00102  & \cellcolor{blue!10} \textbf{0.683}  \\
    & Prediction $\downarrow$ &   1.25  & 1.47   & 0.761  & 0.973  &  0.237& 0.976&  0.578& \cellcolor{blue!10}\textbf{0.141}\\
    \bottomrule 
    \end{tabular}
}
\caption{Generation results on FRED-MD, NN5 Daily, Temperature Rain, and Solar Weekly.}\label{tab:gen}
\end{table*}
\section{Experiments}

In this section, we verify the modeling capability of \model{} empirically. There are three main questions we seek to answer: (1) How effective is \model{} in modeling stiff sequence data? (2) How expressive is \model{} in scaling to real time-series with a variety of temporal dynamics? (3) How efficient in training and inference is \model{} in terms of wall-clock time?

\subsection{Learning to generate data from stiff systems}

Modeling data generated by dynamics with widely separated time scales has been proven to be particularly challenging for vanilla ODE-based approaches which make use of standard explicit solvers for inference and gradient calculation. \cite{kim2021stiff} showed that as the learned dynamics \textit{stiffen} up to track data paths, the ODE numerics start to catastrophically fail; the inference cost raises drastically and the gradient estimation process becomes ill--conditioned. These issues can be mitigated by employing implicit ODE solvers or \textit{ad-hoc} rescalings of the learned vector field (see \citep{kim2021stiff} for 
further details). 

In turn, state--space models do not suffer from stiffness of dynamics as the numerical methods are sidestepped in favor of an exact evaluation of the convolution operator. 
We hereby show that LS4 is able to model data generated by a prototype stiff system.

\mypara{FLAME problem} We consider a simple model of flame growth (FLAME) \citep{wanner1996solving}, which has been extensively studied as a representative of highly stiff systems:
\[\frac{d}{dt}x_t = x_t^2 - x_t^p\]
where $p \in \{3,4,\dots,10\}$. For each $p$, $1000$ trajectories are generated for $t \in [0, 1000]$ with unit increment. 

\mypara{Generation.} In \Figref{fig:toy-gen}, we show the mean trajectories and the distribution at each time step and that our samples closely match the ground-truth data. The Latent ODE \citep{rubanova2019latent} instead fails to do so and produces non--stiff samples drastically different from the target.

\mypara{Marginal Distribution.} We plot the marginal distribution of the real data and the generated data from both our model and Latent ODE. Since the stiff transitions are mostly distributed before $10\%$ of total steps, we visualize the marginal histograms at $4$ time steps equally spaced between the $0.5\%$ and $10\%$ steps in log scale (see \Figref{fig:toy-marginal}). We observe that the empirical histogram matches the ground truth distribution significantly better than what is produced by Latent ODEs, as also qualitatively visible from the samples in (a).

\begin{figure}[h]
    \centering
    \begin{subfigure}[t]{0.8\linewidth}
         \centering
        \includegraphics[width=\linewidth]{./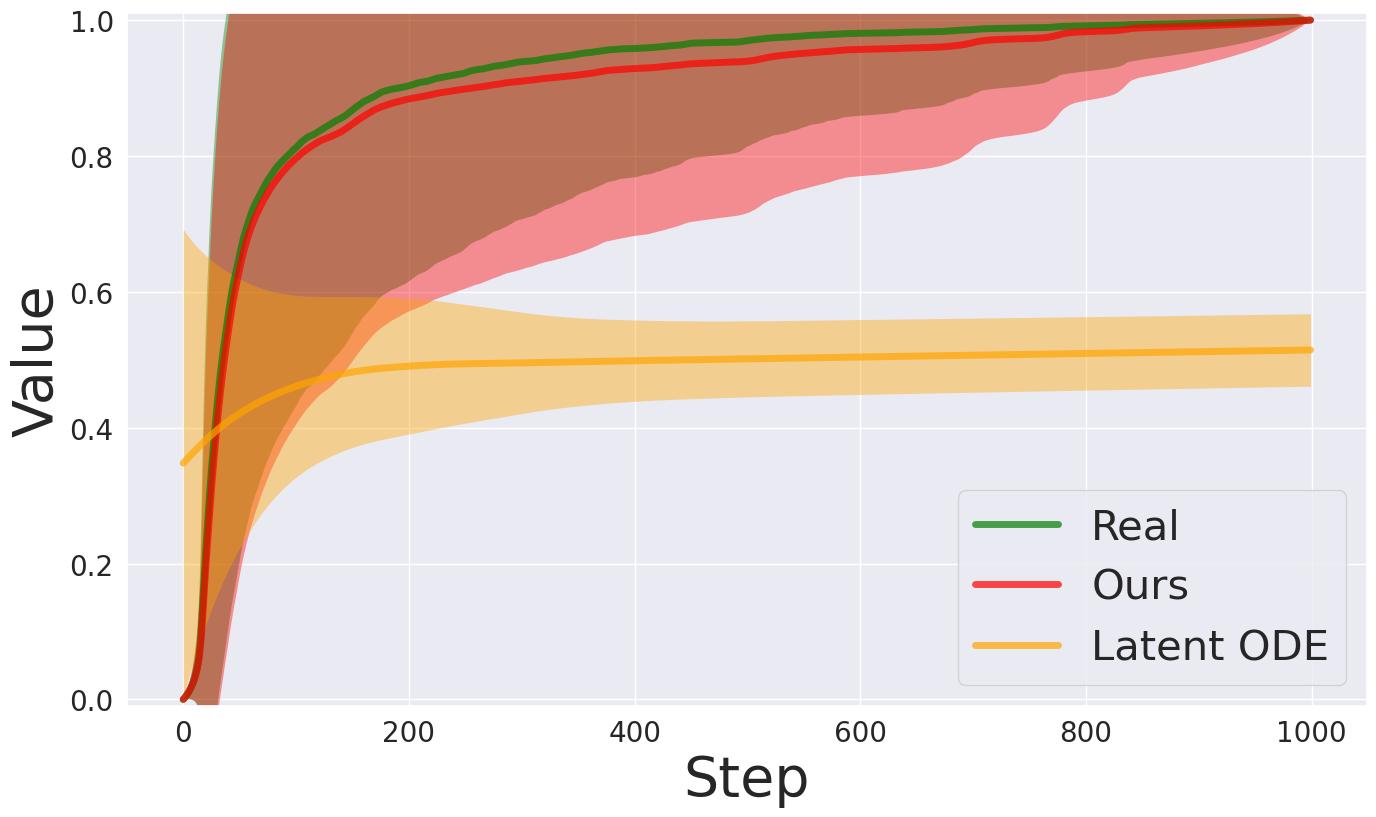}
        \caption{Generation of the stiff system.}
        \vspace{2mm}
        \label{fig:toy-gen}
     \end{subfigure}
    \begin{subfigure}[t]{0.9\linewidth}
        \centering
        \includegraphics[width=\linewidth]{./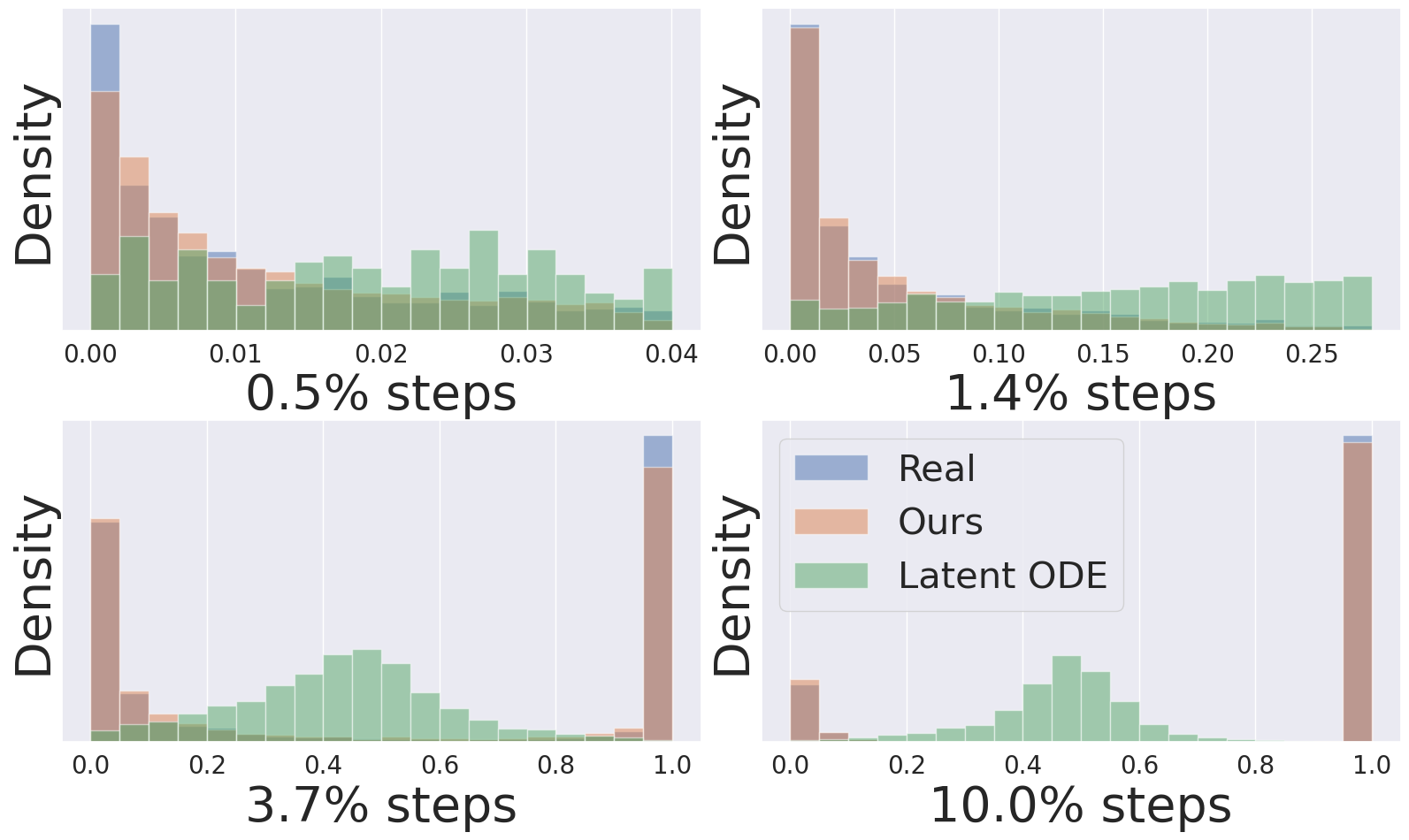}
        \caption{Marginal histograms at steps equally spaced between the $0.5\%$ and $10\%$ steps in log scale.}
        \label{fig:toy-marginal}
     \end{subfigure}
     \vspace{-2mm}
\end{figure}

\subsection{Generation with Real Time-Series Datasets}

We investigate the generative capability of \model{} on real time-series data. We show that our model is capable of fitting a wide variety of time-series data with significantly different temporal dynamics. We leave implementation details to \Appref{app:monash}.

\mypara{Data.} We use Monash Time Series Repository \citep{godahewa2021monash}, a comprehensive benchmark containing $30$ time-series datasets collected in the real world, and we choose FRED-MD, NN5 Daily, Temperature Rain, and Solar Weekly as our target datasets. We select these datasets based on average 1-lag autocorrelation metric, which measures 1-step correlation in time, to demonstrate a variety of temporal dynamics. The selected datasets have 1-lag ranging from 0.38 to 0.98, and this wide range indicates the variety of temporal dynamics that makes generative learning a challenging task. A sample from each dataset can be visualized in \Figref{fig:monash}.
\begin{figure}[h]
    \centering
    \includegraphics[width=\linewidth]{./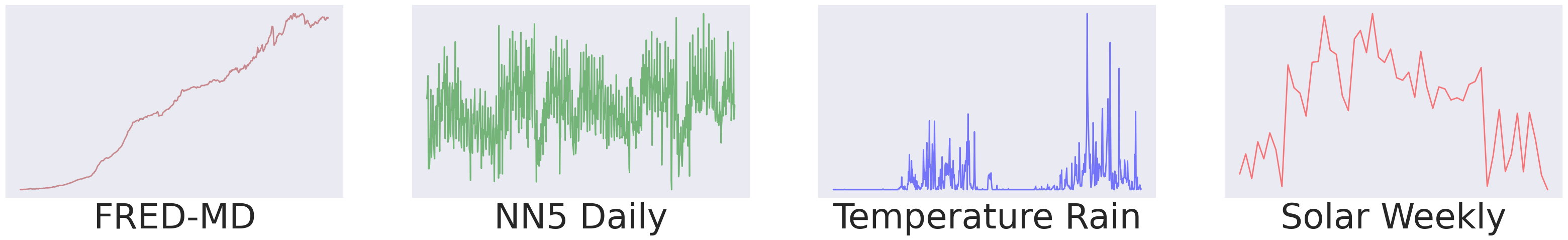}
    \caption{Monash data. The selected datasets exhibit a variety of temporal dynamics ranging from relatively smooth to stiff transitions.}
    \label{fig:monash}\vspace{-5mm}
\end{figure}

\mypara{Metrics.} We propose 3 different metrics for measuring generation performance, namely \textit{Marginal}, \textit{Classification}, and \textit{Prediction} scores. \textit{Marginal} scores calculate the absolute difference between empirical probability density functions of two distributions -- the lower the better \citep{ni2020conditional}. Following \citet{kidger2021neural}, we define \textit{Classification} scores as using a sequence model to classify whether a sample is real or generated and use its cross-entropy loss as a proxy for generation quality -- the higher the less distinguishable the samples, thus better the generation. Similarly, \textit{Prediction} scores use a train-on-synthetic-test-on-real seq2seq model to predict $k$ steps into the future -- the lower the more predictable, thus better the generation. We use a 1-layer S4 \citep{gu2021efficiently} for both Classification and Prediction scores (see \Appref{app:monash} for more details). We will discuss the necessity of all 3 metrics in the following discussion section.

We compare our model with several time-series generative models, namely the VAE-based methods such as RNN-VAE \citep{rubanova2019latent}, GP-VAE \citep{fortuin2020gp}, ODE$^2$VAE \citep{yildiz2019ode2vae}, Latent ODE \citep{rubanova2019latent}, GAN-based methods such as TimeGAN \citep{yoon2019time} and SDE GAN \citep{kidger2021neural}, and SaShiMi \citep{goel2022s}. Note that SaShiMi in this case is equivalent to S4 \citep{gu2021efficiently} as an autoregressive model with no latent variables, and suits as our closest baseline. To modify SaShiMi for general time-series, we modify its output to follow Gaussian probability with fixed variance instead of discrete tokens as in audio.  We show quantitative results in \Tabref{tab:gen}.

Our model significantly outperforms the baselines on all datasets. We note that baseline models have a hard time modeling NN5 Daily and Temperature Rain where the transition dynamics are stiff. For Temperature Rain where most data points lie around $x$-axis with sharp spikes throughout, latent ODE generates mostly closely to the $x$-axis, thus achieving lower marginal scores, but its generation is easily distinguishable from data, thus making it a less favorable generative model. We demonstrate that Marginal scores alone are an insufficient metric for generation quality. SaShiMi, an autoregressive model based on S4, does not perform as well on time series generation in the tasks considered. We further discuss the reason in \Appref{app:monash}.

\subsection{Interpolation \& Extrapolation}

\begin{table*}[t]
%
\centering
\resizebox{\linewidth}{!}{
\begin{tabular}{lll|ccccccc}
\toprule
 Metric & Task  & Data & RNN & RNN-VAE & ODE-RNN & GRU-D & Latent ODE & CRU & \textbf{\model{} (Ours)}  \\
\midrule
\multirow{4}{*}{MSE $(\times 10^{-3})\downarrow$} &\multirow{2}{*}{Interp.} & Physionet &  2.918    &   5.930   & 2.234  & 3.325   & 8.341 & 1.82 & \cellcolor{blue!10}\textbf{0.6287}\\
 & &  USHCN &  4.322 & 7.561 & 2.473 &3.395 &6.859  &0.16   & \cellcolor{blue!10}\textbf{ 0.0594}\\

\cmidrule{2-10}
&\multirow{2}{*}{Extrap.} & Physionet & 3.406 & 3.064  & \cellcolor{blue!10}\textbf{ 3.014 } &  3.120&  4.212 & 6.29 &  4.942\\
&&  USHCN &  9.474&9.083  & 9.045  &8.964  & 8.959  &  12.73& \cellcolor{blue!10}\textbf{2.975}\\
\midrule

\multirow{4}{*}{CRPS $(\times 10^{-2}) \downarrow$} &\multirow{2}{*}{Interp.} & Physionet &  2.09   &  5.59  & 2.40 & 2.71   & 6.16 & - & \cellcolor{blue!10}\textbf{1.25}\\
& & USHCN &  3.33   &  4.68 & 3.18 & 4.69 & 4.68 & - &  \cellcolor{blue!10}\textbf{0.438} \\

\cmidrule{2-10}
&\multirow{2}{*}{Extrap.} & Physionet &  3.30   &  2.17 & \cellcolor{blue!10}\textbf{2.16} &   13.9 &  2.43 & - &  2.36\\
&&  USHCN &   72.1  &  5.01 &  5.09 &  5.01  & 5.04  &  - & \cellcolor{blue!10}\textbf{2.76}\\
\midrule
\bottomrule 
\end{tabular}
}
\caption{Interpolation and extrapolation MSE $(\times 10^{-3})$ scores and CRPS $(\times 10^{-2})$ scores. Lower scores are better.}\label{tab:interpextrap}
\end{table*}

We also show that our model is expressive enough to fit to irregularly-sampled data and performs favorably in terms of interpolation and extrapolation. Interpolation refers to the task of predicting missing data given a subset of a sequence while extrapolation refers to the task that data is separated into 2 segments each with half the length of the full sequence, and one predicts the latter segment given the former.

\mypara{Data.} Following \citet{rubanova2019latent, schirmer2022modeling}, we use USHCN and Physionet as our datasets of choice. The United States Historical Climatology Network (USHCN) \citep{menne2015long} is a climate dataset containing daily measurements form 1,218 weather stations across the US for precipitation, snowfall, snow depth, minimum and maximum temperature. Physionet \citep{silva2012predicting} is a dataset containing health measurements of 41 signals from 8,000 ICU patients in their first 48 hours. We follow preprocessing steps of \citet{schirmer2022modeling} for training and testing. 

\mypara{Metrics.} We use mean squared error (MSE) and continuous ranked probability score (CRPS) to evaluate both interpolation and extrapolation. MSE determines the absolute prediction error while CRPS measures the difference of output cumulative distributions (CDF) against ground-truth. For a point observation, the CDF is assumed to be a step function.

We compare our model with RNN \citep{rumelhart1985learning}, RNN-VAE \citep{chung2014empirical, rubanova2019latent}, ODE-RNN \citep{rubanova2019latent}, GRU-D \citep{rubanova2019latent}, Latent ODE \citep{chen2018neural, rubanova2019latent}, and CRU\footnote{Numbers are taken from the original paper. We keep the significant digits unchanged.}\citep{schirmer2022modeling}. Results are shown in \Tabref{tab:interpextrap}.

We observe that our model outperforms previous continuous-time methods. Our model performs less well on extrapolation for Physionet compared to ODE-RNN and latent ODE and we postulate that this is due to the high variability granted by our latent space. Since new latent variables are generated as we extrapolate, our model generates paths that are more flexible (hence less predictable) than those of Latent ODE, which instead uses a single latent variable to encode an entire path. We also examine \model{}'s ability to perform probabilistic forecasting with CRPS. Notice that MSE and CRPS are correlated. This is due to the output probability being parameterized by Gaussian with fixed variance for all models, such that the difference between the Gaussian and step CDF is correlated with the difference between their mean. We additional observe that our model achieves ELBO of $-669.0$ and $-250.2$ on Physionet interpolation and extrapolation tasks respectively. These bounds are much tighter lower bounds than other VAE-based methods, \ie RNN-VAE, which reports $-412.8$ and $-220.2$, and latent ODE, which reports $-410.3$ and $-168.5$. We leave additional ELBO comparisons in \Appref{app:exp}.

\subsection{Runtime}

We additionally verify the computational efficiency of our model for both training and inference. We do so by training and inferring on synthetic data with controlled lengths specified below.

\mypara{Data.} We create a set of synthetic datasets with lengths $\{80, 320, 1280, 5120, 20480\}$ to investigate scaling of training/inference time with respect to sequence length. Training is done with 100 iterations through the synthetic data, and inference is performed given one batch of synthetic data (see \Appref{app:runtime}).

\mypara{Metrics.} We use wall-clock runtime measured in milliseconds.
\begin{figure}[H]
    \centering    \includegraphics[width=\linewidth]{./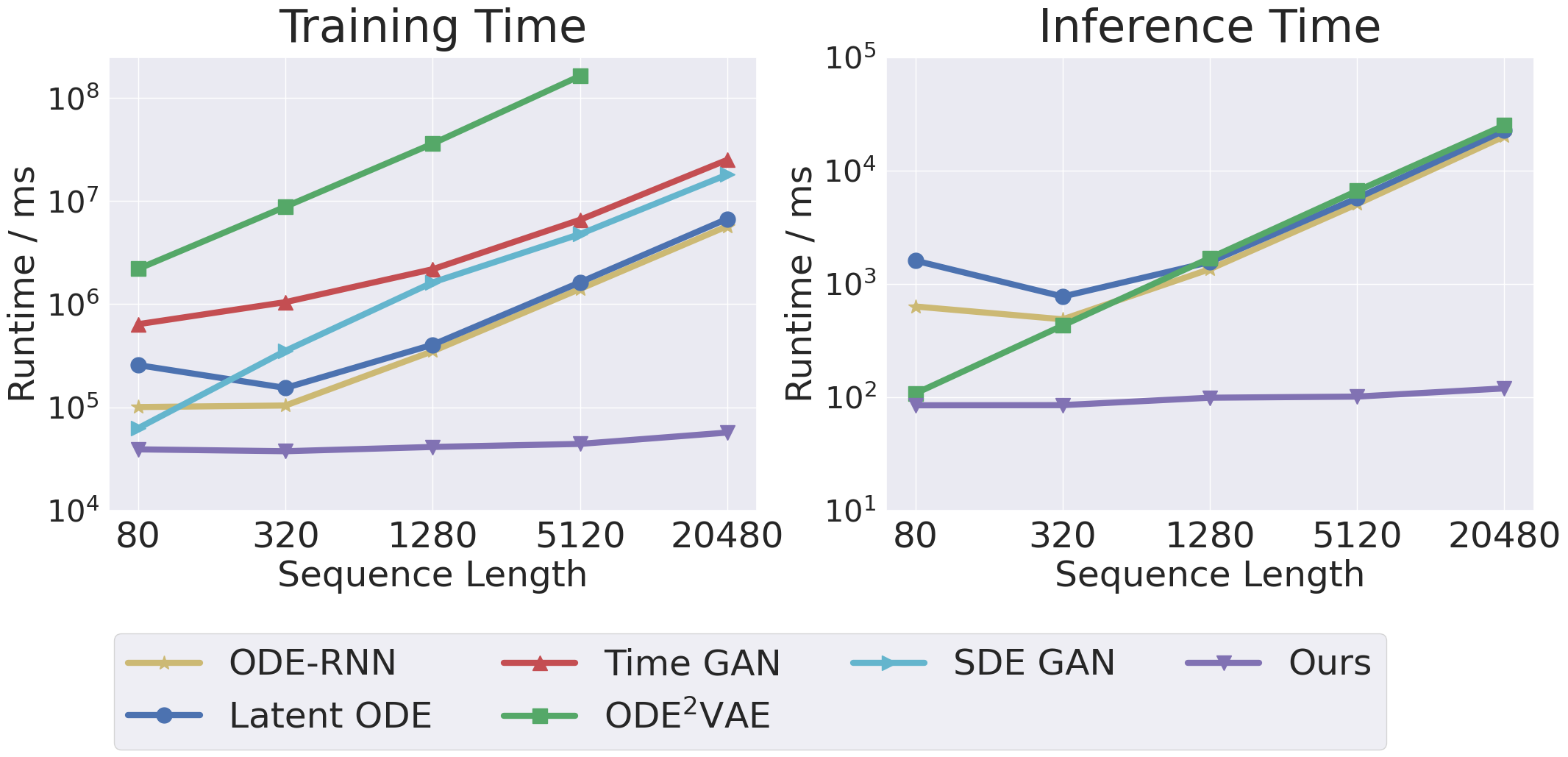}
     \caption{Runtime comparison. The $y$-axis shows run-time (\textbf{ms}) of each setting in log scale. Our runtime stays consistently lower across all sequence lengths investigated.}
     \label{fig:runtime}
     \vspace{-5mm}
\end{figure}
\Figref{fig:runtime} shows model runtime in log scale. ODE$^2$VAE fails to finish training on the last sequence length within a reasonable time frame, so we omit its plot of the last data point. Our model performs consistently and significantly lower than baselines, which are observed to scale linearly with input lengths, and is $\times 100$ faster than baselines in both training and inference on $20480$ length. 
\section{Conclusion}

We introduce \model{}, a powerful generative model with latent space evolution following a state space ODE. Our model is built with a deep stack of \model{} prior/generative/inference blocks, which are trained via standard sequence VAE objectives. We also show that under specific choices of model parameters, \model{} subsumes autoregressive S4 models. Experimentally, we demonstrate the modeling power of \model{} on datasets with a wide variety of temporal dynamics and show significant improvement in generation/interpolation/extrapolation quality. In addition, our model shows $\times 100$ speed-up in training and inference time on long sequences. \model{} demonstrates improved expressivity and computational efficiency, and we believe that it has a further role to play in modeling general time-series sequences.

\bibliography{reference}

\begin{thebibliography}{55}
\providecommand{\natexlab}[1]{#1}
\providecommand{\url}[1]{\texttt{#1}}
\expandafter\ifx\csname urlstyle\endcsname\relax
  \providecommand{\doi}[1]{doi: #1}\else
  \providecommand{\doi}{doi: \begingroup \urlstyle{rm}\Url}\fi

\bibitem[Bayer \& Osendorfer(2014)Bayer and Osendorfer]{bayer2014learning}
Bayer, J. and Osendorfer, C.
\newblock Learning stochastic recurrent networks.
\newblock \emph{arXiv preprint arXiv:1411.7610}, 2014.

\bibitem[Brooks et~al.(2022)Brooks, Hellsten, Aittala, Wang, Aila, Lehtinen,
  Liu, Efros, and Karras]{brooks2022generating}
Brooks, T., Hellsten, J., Aittala, M., Wang, T.-C., Aila, T., Lehtinen, J.,
  Liu, M.-Y., Efros, A.~A., and Karras, T.
\newblock Generating long videos of dynamic scenes.
\newblock \emph{arXiv preprint arXiv:2206.03429}, 2022.

\bibitem[Brown et~al.(2020)Brown, Mann, Ryder, Subbiah, Kaplan, Dhariwal,
  Neelakantan, Shyam, Sastry, Askell, et~al.]{brown2020language}
Brown, T., Mann, B., Ryder, N., Subbiah, M., Kaplan, J.~D., Dhariwal, P.,
  Neelakantan, A., Shyam, P., Sastry, G., Askell, A., et~al.
\newblock Language models are few-shot learners.
\newblock \emph{Advances in neural information processing systems},
  33:\penalty0 1877--1901, 2020.

\bibitem[Burda et~al.(2015)Burda, Grosse, and
  Salakhutdinov]{burda2015importance}
Burda, Y., Grosse, R., and Salakhutdinov, R.
\newblock Importance weighted autoencoders.
\newblock \emph{arXiv preprint arXiv:1509.00519}, 2015.

\bibitem[Chen(1984)]{chen1984linear}
Chen, C.-T.
\newblock \emph{Linear system theory and design}.
\newblock Saunders college publishing, 1984.

\bibitem[Chen et~al.(2018)Chen, Rubanova, Bettencourt, and
  Duvenaud]{chen2018neural}
Chen, R.~T., Rubanova, Y., Bettencourt, J., and Duvenaud, D.~K.
\newblock Neural ordinary differential equations.
\newblock \emph{Advances in neural information processing systems}, 31, 2018.

\bibitem[Chilkuri \& Eliasmith(2021)Chilkuri and
  Eliasmith]{chilkuri2021parallelizing}
Chilkuri, N.~R. and Eliasmith, C.
\newblock Parallelizing legendre memory unit training.
\newblock In \emph{International Conference on Machine Learning}, pp.\
  1898--1907. PMLR, 2021.

\bibitem[Chung et~al.(2014)Chung, Gulcehre, Cho, and
  Bengio]{chung2014empirical}
Chung, J., Gulcehre, C., Cho, K., and Bengio, Y.
\newblock Empirical evaluation of gated recurrent neural networks on sequence
  modeling.
\newblock \emph{arXiv preprint arXiv:1412.3555}, 2014.

\bibitem[Chung et~al.(2015)Chung, Kastner, Dinh, Goel, Courville, and
  Bengio]{chung2015recurrent}
Chung, J., Kastner, K., Dinh, L., Goel, K., Courville, A.~C., and Bengio, Y.
\newblock A recurrent latent variable model for sequential data.
\newblock \emph{Advances in neural information processing systems}, 28, 2015.

\bibitem[de~B{\'e}zenac et~al.(2020)de~B{\'e}zenac, Rangapuram, Benidis,
  Bohlke-Schneider, Kurle, Stella, Hasson, Gallinari, and
  Januschowski]{de2020normalizing}
de~B{\'e}zenac, E., Rangapuram, S.~S., Benidis, K., Bohlke-Schneider, M.,
  Kurle, R., Stella, L., Hasson, H., Gallinari, P., and Januschowski, T.
\newblock Normalizing kalman filters for multivariate time series analysis.
\newblock \emph{Advances in Neural Information Processing Systems},
  33:\penalty0 2995--3007, 2020.

\bibitem[De~Brouwer et~al.(2019)De~Brouwer, Simm, Arany, and Moreau]{de2019gru}
De~Brouwer, E., Simm, J., Arany, A., and Moreau, Y.
\newblock Gru-ode-bayes: Continuous modeling of sporadically-observed time
  series.
\newblock \emph{Advances in neural information processing systems}, 32, 2019.

\bibitem[Deng et~al.(2020)Deng, Chang, Brubaker, Mori, and
  Lehrmann]{deng2020modeling}
Deng, R., Chang, B., Brubaker, M.~A., Mori, G., and Lehrmann, A.
\newblock Modeling continuous stochastic processes with dynamic normalizing
  flows.
\newblock \emph{Advances in Neural Information Processing Systems},
  33:\penalty0 7805--7815, 2020.

\bibitem[Fabius \& Van~Amersfoort(2014)Fabius and
  Van~Amersfoort]{fabius2014variational}
Fabius, O. and Van~Amersfoort, J.~R.
\newblock Variational recurrent auto-encoders.
\newblock \emph{arXiv preprint arXiv:1412.6581}, 2014.

\bibitem[Fortuin et~al.(2020)Fortuin, Baranchuk, R{\"a}tsch, and
  Mandt]{fortuin2020gp}
Fortuin, V., Baranchuk, D., R{\"a}tsch, G., and Mandt, S.
\newblock Gp-vae: Deep probabilistic time series imputation.
\newblock In \emph{International conference on artificial intelligence and
  statistics}, pp.\  1651--1661. PMLR, 2020.

\bibitem[Franceschi et~al.(2020)Franceschi, Delasalles, Chen, Lamprier, and
  Gallinari]{franceschi2020stochastic}
Franceschi, J.-Y., Delasalles, E., Chen, M., Lamprier, S., and Gallinari, P.
\newblock Stochastic latent residual video prediction.
\newblock In \emph{International Conference on Machine Learning}, pp.\
  3233--3246. PMLR, 2020.

\bibitem[Godahewa et~al.(2021)Godahewa, Bergmeir, Webb, Hyndman, and
  Montero-Manso]{godahewa2021monash}
Godahewa, R., Bergmeir, C., Webb, G.~I., Hyndman, R.~J., and Montero-Manso, P.
\newblock Monash time series forecasting archive.
\newblock In \emph{Neural Information Processing Systems Track on Datasets and
  Benchmarks}, 2021.

\bibitem[Goel et~al.(2022)Goel, Gu, Donahue, and R{\'e}]{goel2022s}
Goel, K., Gu, A., Donahue, C., and R{\'e}, C.
\newblock It's raw! audio generation with state-space models.
\newblock \emph{arXiv preprint arXiv:2202.09729}, 2022.

\bibitem[Gu et~al.(2020)Gu, Dao, Ermon, Rudra, and R{\'e}]{gu2020hippo}
Gu, A., Dao, T., Ermon, S., Rudra, A., and R{\'e}, C.
\newblock Hippo: Recurrent memory with optimal polynomial projections.
\newblock \emph{Advances in Neural Information Processing Systems},
  33:\penalty0 1474--1487, 2020.

\bibitem[Gu et~al.(2021)Gu, Goel, and R{\'e}]{gu2021efficiently}
Gu, A., Goel, K., and R{\'e}, C.
\newblock Efficiently modeling long sequences with structured state spaces.
\newblock \emph{arXiv preprint arXiv:2111.00396}, 2021.

\bibitem[Gu et~al.(2022)Gu, Gupta, Goel, and R{\'e}]{gu2022parameterization}
Gu, A., Gupta, A., Goel, K., and R{\'e}, C.
\newblock On the parameterization and initialization of diagonal state space
  models.
\newblock \emph{arXiv preprint arXiv:2206.11893}, 2022.

\bibitem[Hersbach et~al.(2020)Hersbach, Bell, Berrisford, Hirahara,
  Hor{\'a}nyi, Mu{\~n}oz-Sabater, Nicolas, Peubey, Radu, Schepers,
  et~al.]{hersbach2020era5}
Hersbach, H., Bell, B., Berrisford, P., Hirahara, S., Hor{\'a}nyi, A.,
  Mu{\~n}oz-Sabater, J., Nicolas, J., Peubey, C., Radu, R., Schepers, D.,
  et~al.
\newblock The era5 global reanalysis.
\newblock \emph{Quarterly Journal of the Royal Meteorological Society},
  146\penalty0 (730):\penalty0 1999--2049, 2020.

\bibitem[Ho et~al.(2020)Ho, Jain, and Abbeel]{ho2020denoising}
Ho, J., Jain, A., and Abbeel, P.
\newblock Denoising diffusion probabilistic models.
\newblock \emph{Advances in Neural Information Processing Systems},
  33:\penalty0 6840--6851, 2020.

\bibitem[Hochreiter(1998)]{hochreiter1998vanishing}
Hochreiter, S.
\newblock The vanishing gradient problem during learning recurrent neural nets
  and problem solutions.
\newblock \emph{International Journal of Uncertainty, Fuzziness and
  Knowledge-Based Systems}, 6\penalty0 (02):\penalty0 107--116, 1998.

\bibitem[Kidger et~al.(2020)Kidger, Morrill, Foster, and
  Lyons]{kidger2020neural}
Kidger, P., Morrill, J., Foster, J., and Lyons, T.
\newblock Neural controlled differential equations for irregular time series.
\newblock \emph{Advances in Neural Information Processing Systems},
  33:\penalty0 6696--6707, 2020.

\bibitem[Kidger et~al.(2021)Kidger, Foster, Li, and Lyons]{kidger2021neural}
Kidger, P., Foster, J., Li, X., and Lyons, T.~J.
\newblock Neural sdes as infinite-dimensional gans.
\newblock In \emph{International Conference on Machine Learning}, pp.\
  5453--5463. PMLR, 2021.

\bibitem[Kim et~al.(2021)Kim, Ji, Deng, Ma, and Rackauckas]{kim2021stiff}
Kim, S., Ji, W., Deng, S., Ma, Y., and Rackauckas, C.
\newblock Stiff neural ordinary differential equations.
\newblock \emph{Chaos: An Interdisciplinary Journal of Nonlinear Science},
  31\penalty0 (9):\penalty0 093122, 2021.

\bibitem[Kingma \& Welling(2013)Kingma and Welling]{kingma2013auto}
Kingma, D.~P. and Welling, M.
\newblock Auto-encoding variational bayes.
\newblock \emph{arXiv preprint arXiv:1312.6114}, 2013.

\bibitem[Li et~al.(2020)Li, Wong, Chen, and Duvenaud]{li2020scalable}
Li, X., Wong, T.-K.~L., Chen, R.~T., and Duvenaud, D.
\newblock Scalable gradients for stochastic differential equations.
\newblock In \emph{International Conference on Artificial Intelligence and
  Statistics}, pp.\  3870--3882. PMLR, 2020.

\bibitem[Massaroli et~al.(2020)Massaroli, Poli, Park, Yamashita, and
  Asama]{massaroli2020dissecting}
Massaroli, S., Poli, M., Park, J., Yamashita, A., and Asama, H.
\newblock Dissecting neural odes.
\newblock \emph{Advances in Neural Information Processing Systems},
  33:\penalty0 3952--3963, 2020.

\bibitem[Massaroli et~al.(2021)Massaroli, Poli, Sonoda, Suzuki, Park,
  Yamashita, and Asama]{massaroli2021differentiable}
Massaroli, S., Poli, M., Sonoda, S., Suzuki, T., Park, J., Yamashita, A., and
  Asama, H.
\newblock Differentiable multiple shooting layers.
\newblock \emph{Advances in Neural Information Processing Systems},
  34:\penalty0 16532--16544, 2021.

\bibitem[Menne et~al.(2015)Menne, Williams~Jr, and Vose]{menne2015long}
Menne, M., Williams~Jr, C., and Vose, R.
\newblock Long-term daily climate records from stations across the contiguous
  united states, 2015.

\bibitem[Ni et~al.(2020)Ni, Szpruch, Wiese, Liao, and Xiao]{ni2020conditional}
Ni, H., Szpruch, L., Wiese, M., Liao, S., and Xiao, B.
\newblock Conditional sig-wasserstein gans for time series generation.
\newblock \emph{arXiv preprint arXiv:2006.05421}, 2020.

\bibitem[Niesen \& Hall(2004)Niesen and Hall]{niesen2004global}
Niesen, J. and Hall, T.
\newblock \emph{On the global error of discretization methods for ordinary
  differential equations}.
\newblock PhD thesis, Citeseer, 2004.

\bibitem[Oord et~al.(2016)Oord, Dieleman, Zen, Simonyan, Vinyals, Graves,
  Kalchbrenner, Senior, and Kavukcuoglu]{oord2016wavenet}
Oord, A. v.~d., Dieleman, S., Zen, H., Simonyan, K., Vinyals, O., Graves, A.,
  Kalchbrenner, N., Senior, A., and Kavukcuoglu, K.
\newblock Wavenet: A generative model for raw audio.
\newblock \emph{arXiv preprint arXiv:1609.03499}, 2016.

\bibitem[Oppenheim(1999)]{oppenheim1999discrete}
Oppenheim, A.~V.
\newblock \emph{Discrete-time signal processing}.
\newblock Pearson Education India, 1999.

\bibitem[Oppenheim \& Schafer(1975)Oppenheim and Schafer]{oppenheim1975digital}
Oppenheim, A.~V. and Schafer, R.~W.
\newblock Digital signal processing(book).
\newblock \emph{Research supported by the Massachusetts Institute of
  Technology, Bell Telephone Laboratories, and Guggenheim Foundation. Englewood
  Cliffs, N. J., Prentice-Hall, Inc., 1975. 598 p}, 1975.

\bibitem[Peng et~al.(1995)Peng, Havlin, Stanley, and
  Goldberger]{peng1995quantification}
Peng, C.-K., Havlin, S., Stanley, H.~E., and Goldberger, A.~L.
\newblock Quantification of scaling exponents and crossover phenomena in
  nonstationary heartbeat time series.
\newblock \emph{Chaos: an interdisciplinary journal of nonlinear science},
  5\penalty0 (1):\penalty0 82--87, 1995.

\bibitem[Poli et~al.(2019)Poli, Park, and Ilievski]{poli2019wattnet}
Poli, M., Park, J., and Ilievski, I.
\newblock Wattnet: learning to trade fx via hierarchical spatio-temporal
  representation of highly multivariate time series.
\newblock \emph{arXiv preprint arXiv:1909.10801}, 2019.

\bibitem[Rasul et~al.(2021)Rasul, Seward, Schuster, and
  Vollgraf]{rasul2021autoregressive}
Rasul, K., Seward, C., Schuster, I., and Vollgraf, R.
\newblock Autoregressive denoising diffusion models for multivariate
  probabilistic time series forecasting.
\newblock In \emph{International Conference on Machine Learning}, pp.\
  8857--8868. PMLR, 2021.

\bibitem[Romero et~al.(2021)Romero, Kuzina, Bekkers, Tomczak, and
  Hoogendoorn]{romero2021ckconv}
Romero, D.~W., Kuzina, A., Bekkers, E.~J., Tomczak, J.~M., and Hoogendoorn, M.
\newblock Ckconv: Continuous kernel convolution for sequential data.
\newblock \emph{arXiv preprint arXiv:2102.02611}, 2021.

\bibitem[Rubanova et~al.(2019)Rubanova, Chen, and Duvenaud]{rubanova2019latent}
Rubanova, Y., Chen, R.~T., and Duvenaud, D.~K.
\newblock Latent ordinary differential equations for irregularly-sampled time
  series.
\newblock \emph{Advances in neural information processing systems}, 32, 2019.

\bibitem[Rumelhart et~al.(1985)Rumelhart, Hinton, and
  Williams]{rumelhart1985learning}
Rumelhart, D.~E., Hinton, G.~E., and Williams, R.~J.
\newblock Learning internal representations by error propagation.
\newblock Technical report, California Univ San Diego La Jolla Inst for
  Cognitive Science, 1985.

\bibitem[Schirmer et~al.(2022)Schirmer, Eltayeb, Lessmann, and
  Rudolph]{schirmer2022modeling}
Schirmer, M., Eltayeb, M., Lessmann, S., and Rudolph, M.
\newblock Modeling irregular time series with continuous recurrent units.
\newblock In \emph{International Conference on Machine Learning}, pp.\
  19388--19405. PMLR, 2022.

\bibitem[Shampine \& Thompson(2007)Shampine and Thompson]{shampine2007stiff}
Shampine, L.~F. and Thompson, S.
\newblock Stiff systems.
\newblock \emph{Scholarpedia}, 2\penalty0 (3):\penalty0 2855, 2007.

\bibitem[Silva et~al.(2012)Silva, Moody, Scott, Celi, and
  Mark]{silva2012predicting}
Silva, I., Moody, G., Scott, D.~J., Celi, L.~A., and Mark, R.~G.
\newblock Predicting in-hospital mortality of icu patients: The
  physionet/computing in cardiology challenge 2012.
\newblock In \emph{2012 Computing in Cardiology}, pp.\  245--248. IEEE, 2012.

\bibitem[Sohl-Dickstein et~al.(2015)Sohl-Dickstein, Weiss, Maheswaranathan, and
  Ganguli]{sohl2015deep}
Sohl-Dickstein, J., Weiss, E., Maheswaranathan, N., and Ganguli, S.
\newblock Deep unsupervised learning using nonequilibrium thermodynamics.
\newblock In \emph{ICML}, 2015.

\bibitem[Song \& Ermon(2019)Song and Ermon]{song2019generative}
Song, Y. and Ermon, S.
\newblock Generative modeling by estimating gradients of the data distribution.
\newblock \emph{NeurIPS}, 32, 2019.

\bibitem[Song et~al.(2020)Song, Sohl-Dickstein, Kingma, Kumar, Ermon, and
  Poole]{song2020score}
Song, Y., Sohl-Dickstein, J., Kingma, D.~P., Kumar, A., Ermon, S., and Poole,
  B.
\newblock Score-based generative modeling through stochastic differential
  equations.
\newblock \emph{arXiv preprint arXiv:2011.13456}, 2020.

\bibitem[Tashiro et~al.(2021)Tashiro, Song, Song, and Ermon]{tashiro2021csdi}
Tashiro, Y., Song, J., Song, Y., and Ermon, S.
\newblock Csdi: Conditional score-based diffusion models for probabilistic time
  series imputation.
\newblock \emph{Advances in Neural Information Processing Systems},
  34:\penalty0 24804--24816, 2021.

\bibitem[Wanner \& Hairer(1996)Wanner and Hairer]{wanner1996solving}
Wanner, G. and Hairer, E.
\newblock \emph{Solving ordinary differential equations II}, volume 375.
\newblock Springer Berlin Heidelberg New York, 1996.

\bibitem[Yildiz et~al.(2019)Yildiz, Heinonen, and
  Lahdesmaki]{yildiz2019ode2vae}
Yildiz, C., Heinonen, M., and Lahdesmaki, H.
\newblock Ode2vae: Deep generative second order odes with bayesian neural
  networks.
\newblock \emph{Advances in Neural Information Processing Systems}, 32, 2019.

\bibitem[Yoon et~al.(2019)Yoon, Jarrett, and Van~der Schaar]{yoon2019time}
Yoon, J., Jarrett, D., and Van~der Schaar, M.
\newblock Time-series generative adversarial networks.
\newblock \emph{Advances in neural information processing systems}, 32, 2019.

\bibitem[Yu et~al.(2017)Yu, Zhang, Wang, and Yu]{yu2017seqgan}
Yu, L., Zhang, W., Wang, J., and Yu, Y.
\newblock Seqgan: Sequence generative adversarial nets with policy gradient.
\newblock In \emph{Proceedings of the AAAI conference on artificial
  intelligence}, volume~31, 2017.

\bibitem[Yu et~al.(2022)Yu, Tack, Mo, Kim, Kim, Ha, and Shin]{yu2022generating}
Yu, S., Tack, J., Mo, S., Kim, H., Kim, J., Ha, J.-W., and Shin, J.
\newblock Generating videos with dynamics-aware implicit generative adversarial
  networks.
\newblock \emph{arXiv preprint arXiv:2202.10571}, 2022.

\bibitem[Zhuang et~al.(2020)Zhuang, Dvornek, Li, Tatikonda, Papademetris, and
  Duncan]{zhuang2020adaptive}
Zhuang, J., Dvornek, N., Li, X., Tatikonda, S., Papademetris, X., and Duncan,
  J.
\newblock Adaptive checkpoint adjoint method for gradient estimation in neural
  ode.
\newblock In \emph{International Conference on Machine Learning}, pp.\
  11639--11649. PMLR, 2020.

\end{thebibliography}
\bibliographystyle{icml2023}

\newpage

\onecolumn
\begin{center}
    \huge\textbf{Deep Latent State Space Models for Time-Series Generation}
\end{center}
\vspace*{3mm}
\appendix
\parttoc

\section{Derivations}

\subsection{Computational Complexity of Vanilla Recurrent Representation}\label{app:der-recurr}
Assuming recurrence of the simplest form in \Eqref{eq:dis-ssm}, fulling computing matrix multiplication $\bar{\mA}\bar{h}_{t_k}$ requires $\mathcal{O}(N^2)$. Fully computing all hidden states sequentially requires $\mathcal{O}(N^2L)$. In space, saving each hidden state requires $\mathcal{O}(N)$ and in total requires $\mathcal{O}(NL)$.

\section{Proof}\label{app:proof}

To prove this result in \propref{prop:subsume}, we first prove the following proposition.
\begin{restatable}[]{proposition}{expressivity}\label{prop:expressivity} (Expressivity.)
Given any deep autoregressive S4 model $r: (x_{<t_n}, t_n) \mapsto y_{t_n}$ evaluated at time $t_n$ given input sequence $x_{<t_n}$, there exists a choice of $\theta$ such that $\mu_{x,n}(x_{<t_n}, 0, \theta) = r(x_{<t_n}, t_n)$.
\end{restatable}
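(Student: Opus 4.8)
The plan is to exhibit a parameter setting $\theta$ under which the latent channel of the \model{} generative model is completely decoupled from the observation channel, so that the map $x_{<t_n}\mapsto \mu_{x,n}(x_{<t_n},0,\theta)$ computed by a stack of \model{} generative blocks coincides, layer by layer, with the stack of causally shifted S4 layers that defines the target model $r$; I then read off $\theta$ from the parameters of $r$. Throughout I work on the uniform grid $t_k=t_0+k\Delta$ with the discretization $\bar\mA=e^{\mA\Delta}$ (or the bilinear surrogate used in the preliminaries), and I view a deep autoregressive S4 model as a composition of $\ell$ S4 layers followed by an output projection, where each layer is applied in the causal/shifted form that consumes only $x_{t_0},\dots,x_{t_{n-1}}$ — this form is forced by the requirement that the conditional $r(x_n\mid x_{<n})$ depend only on the past.

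First I would switch off every parameter that feeds the latent stream into the observation stream: the input matrices $\mE$ inside $\beta_3$ and $\beta_4$, and the output matrices $\mF_{g_x},\mF_{g_z}$, and likewise for every block in the stack. By the convolutional view \Eqref{eq:conv-multi-ssm}, the latent kernel $\hat\mK_{t_k}=\bar\mA^{k}\bar\mE$ then vanishes, so $h_{t_n}=H_{\beta_3}(0,0,H_{\beta_4}(x_{[t_0,t_{n-1}]},0,\mathbf{0},t_{n-1}),t_n)$ and $g_{x,n}=\mathrm{GELU}(\mC_{g_x}h_{t_n}+\mD_{g_x}x_{t_{n-1}})$ no longer involve $z$ at all; in particular $\mu_{x,n}(x_{<t_n},z,\theta)$ is constant in $z$, so it suffices to match this $z$-free map with $r(\cdot,t_n)$.

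Next, for a single generative block I would unroll the two-phase SSM. The state produced by $H_{\beta_4}$ at $t_{n-1}$ is a causal linear functional $h_{t_{n-1}}=\sum_{j<n}\bar\mA_4^{n-1-j}\bar\mB_4\,x_{t_j}$ (up to the index convention of the preliminaries), and the single autonomous continuation step on $(t_{n-1},t_n]$ multiplies it by $\bar\mA_3$, giving $h_{t_n}=\sum_{j<n}\bar\mA_3\bar\mA_4^{n-1-j}\bar\mB_4\,x_{t_j}$. Hence, before the nonlinearity, $g_{x,n}=\mathrm{GELU}\!\big(\sum_{j<n}(\mC_{g_x}\bar\mA_3)\bar\mA_4^{n-1-j}\bar\mB_4\,x_{t_j}+\mD_{g_x}x_{t_{n-1}}\big)$, which is exactly the pre-activation output of a causally shifted S4 layer with state matrix $\mA_4$, input matrix $\mB_4$, skip $\mD_{g_x}$ and output matrix $\mC_{g_x}\bar\mA_3=\mC_{g_x}e^{\mA_3\Delta}$. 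So I would set $\mA_4,\mB_4,\mD_{g_x}$ equal to the SSM parameters of the corresponding layer of $r$, choose $\mA_3$ freely (e.g. its HiPPO initialization), and take $\mC_{g_x}=\mC^{(r)}e^{-\mA_3\Delta}$, which is well defined because a matrix exponential is invertible. The remaining post-SSM operations of the block — $\mathrm{GELU}$, the mixing map $\mG_{g_x}$, the bias $b_{g_x}$, the $\mathrm{LayerNorm}$, and the residual $+\,x_{t_{n-1}}$ — are matched one-to-one with the corresponding components of the S4 layer, which is possible precisely because, by hypothesis, the non-SSM architecture is the same in both models. Finally I would induct on depth: the output sequence $(\hat g_{x,k})_k$ of block $m$ is passed to block $m+1$ exactly as the layer-$m$ outputs of $r$ are passed to layer $m+1$, and the generation initial condition is handled by taking $\mu_{x,0}(0,\theta)$ equal to the (constant) first-token output of $r$, a free parameter; composing the $\ell$ blocks and the output projection yields $\mu_{x,n}(x_{<t_n},0,\theta)=r(x_{<t_n},t_n)$ for all $n$.

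The main obstacle is the extra factor $\bar\mA_3=e^{\mA_3\Delta}$ contributed by the autonomous continuation step: an \model{} generative block does not produce a bare S4 convolution but that convolution post-multiplied by an invertible matrix, so one must argue the factor is harmless. Invertibility of the matrix exponential resolves this by folding the factor into $\mC_{g_x}$ as above, and since the statement is existential over $\theta$ we are free to pick $\mA_3$ so that this remains possible even when the $\mA$'s are constrained to a structured (e.g. HiPPO/DPLR) parameterization. A lesser point requiring care is that the two models must use the same discretization rule and step size $\Delta$; since the discretization is part of ``the architecture between SSM layers'' this may be assumed, and the rest of the verification is routine bookkeeping.
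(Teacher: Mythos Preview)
Your proposal is correct and follows the paper's approach: zero out $\mE$ and $\mF$ in every block so the latent stream decouples from the observation stream, then match the remaining SSM parameters to those of $r$ layer by layer. Your treatment is in fact more careful than the paper's brief sketch, since you explicitly isolate and absorb the extra $\bar\mA_3=e^{\mA_3\Delta}$ factor coming from the autonomous continuation step---a detail the paper's sketch elides under ``exactly reduces it to the SSM of an S4 model.''
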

\begin{sproof}
Consider SSM of the form in \Eqref{eq:multi-ssm} as a building block to our generative model with parameter $\theta$. We can choose $\mE = \mF = 0$ for all layers, which exactly reduces it to the SSM of an S4 model. Keeping all other hyperparameters (\eg non-linearities, number of stacking layers) the same, the final model is exactly the same as a deep autoregressive S4 model.
\end{sproof}

Now we give a proof sketch to \propref{prop:subsume}, 
\subsume*
\begin{sproof}
    From \propref{prop:expressivity} we know that we can choose $\theta$ so that $p_\theta(x|z) = p_\theta(x) = r(x)$ for all $z$, i.e., choose a decoder that ignores the latent variables $z$ and uses the same autoregressive structure over the observed variables as $r(x)$. This implies the posterior $p_{\theta, \lambda}(z|x)$ is equal to the prior $p_\lambda(z)$. We can then choose $\lambda$ and $\phi$ so that $p_\lambda(z) = \mathcal{N}(0,I)$ and $q_\phi(z|x)=\mathcal{N}(0,I)$ for all $x$. 
\end{sproof}

\efficiency*

\begin{proof}
    Recall \textit{SISO} SSM of the form
    \begin{align}
    \begin{split}
        \frac{d}{dt}\bm{h}_t &= \mA \bm{h}_t + \mB x_t\\
        y_t &= \mC \bm{h}_t + \mD x_t
    \end{split}
    \end{align}
    The calculation of $(y_{t_0}, \dots, y_{t_L})$ involves materializing the convolution filter, which can be calculated in $\mathcal{O}((L+N)\log(L+N))$ time and $\mathcal{O}(L + N)$ space for diagonal-plus-low-rank matrices \citep{gu2021efficiently}. Since the convolution is constant time in frequency domain, another computation cost comes from Fast Fourier Transform (FFT) and its inverse, which is $\mathcal{O}(L\log L)$ in time. The computation scales linearly with heads, Thus, a multi-input-multi-output (MIMO) SSM with $H$ heads can be processed in $\mathcal{O}(H(L+N)\log(L+N))$ time and $\mathcal{O}(H(L + N))$ space.
\end{proof}



\section{Architecture} \label{app:arch}

We parametrize our models using a similar architecture as in \citet{goel2022s}, but there is no pooling operation because for general time-series the time length is hardly divisible by a reasonable factor. Before we present the full structure, we present how a multi-channel inputs are parametrized (in the case of \model{} prior layer (\ref{eq:prior_layer}):
\begin{lstlisting}[language=Python]
    def (*@\textcolor{blue}{LS4\_prior\_layer\_multi}@*)(z, psi):
        # z: (B, L, C) 
        for c in range(C):
            z[:,:,c] = LS4_prior_layer(z[:,:,c], *psi.LS4_params)
        z = linear(z) # (B, L, C)  channel-wise mixing
        return z
\end{lstlisting}
The for loop is presented for demonstration purposes. In practice, the channels can be processed in parallel.
\subsection{Prior Model}
We specify the parametrization of $\mu_{z,n}$ and $\sigma_{z,n}$ in pseudo-code as the outputs of the following functions
\begin{lstlisting}[language=Python]
    def (*@\textcolor{blue}{prior\_model}@*)(z, lambda):
        # z: (B, L, z_dim) this is for time [t_0, t_{n-1}]
        z = linear(z) # (B,L,H) encoding to H

        outputs = []
        outputs.append(z)
        for i in range(lambda.num_layers1):
            z = linear(z) #  (B, L, H) -> (B, L, 2H)
            outputs.append(z)
            
        for i in range(lambda.num_layers2):
            z = LS4_prior_block_multi(z, *lambda.LS4_params)
                    # (B, L, H) -> (B, L, H) multi-channel SSMs
            z = ResBlock(z) # this is a general 1 layer residual block
        z = z + outputs.pop()

        for i in range(lambda.num_layers1):
            z = z + outputs.pop()
            outputs.append(z)
            z = linear(z)  #  (B, L, 2H) -> (B, L, H)
            for i in range(lambda.num_layers2):
                z = LS4_prior_block_multi(z, *lambda.LS4_params) 
                    # (B, L, H) -> (B, L, H) multi-channel SSMs
                z = ResBlock(z) # this is a general 1 layer residual block
                
            z = z + outputs.pop()
        z = layernorm(z)
        z = linear(z) # (B,L,z_dim)
        mu_z = LS4_prior_block_multi(z, *lambda.LS4_params) # (B,L,z_dim)
        sigma_z = LS4_prior_block_multi(z, *lambda.LS4_params) 
                                                        # (B,L,z_dim)
        return mu_z, sigma_z
\end{lstlisting}

\subsection{Generative Model}
\begin{lstlisting}[language=Python]
    def (*@\textcolor{blue}{prior\_model}@*)(x, z, theta):
        # z: (B, L, z_dim) this is for time [t_0, t_{n-1}]
        z = linear(z) # (B,L,H) encoding to H
        x = linear(x) # (B,L,H) encoding to H

        outputs_x, outputs_z = [], []
        outputs_z.append(z)
        outputs_x.append(x)
        for i in range(lambda.num_layers1):
            z = linear(z) #  (B, L, H) -> (B, L, 2H)
            x = linear(x) #  (B, L, H) -> (B, L, 2H)
            outputs_z.append(z)
            outputs_x.append(x)
            
        for i in range(lambda.num_layers2):
            z, x = LS4_gen_block_multi(z, *lambda.LS4_params) 
                    # (B, L, H) -> (B, L, H) multi-channel SSMs
            zx = ResBlock(concat(z, x)) 
                    # this is a general 1 layer residual block
            z, x = split(z, x)
        z = z + outputs_z.pop()
        x = x + outputs_x.pop()

        for i in range(lambda.num_layers1):
            z = z + outputs_z.pop()
            x = x + outputs_x.pop()
            outputs_z.append(z)
            outputs_x.append(z)
            z = linear(z)  #  (B, L, 2H) -> (B, L, H)
            x = linear(x)  #  (B, L, 2H) -> (B, L, H)
            for i in range(lambda.num_layers2):
                x, z = LS4_gen_block_multi(x, z, *lambda.LS4_params)
                            # (B, L, H) -> (B, L, H) multi-channel SSMs
                zx = ResBlock(concat(z, x)) 
                    # this is a general 1 layer residual block
                z, x = split(z, x)
                
            z = z + outputs_z.pop()
            x = x + outputs_x.pop()
        x = layernorm(x)
        z = layernorm(z)
        x = linear(concat(x, z)) # (B,L,2H) -> (B,L,x_dim)
        return mu_x
\end{lstlisting}
In practice, we find that only using $z$ input for the entire generative model produces better generation better than including $x$. We hypothesize that $x$ presents too strong of a signal for the model to reconstruct, and so the model learns to ignore signals from $z$ in that case.

\subsection{Inference Model}

\begin{lstlisting}[language=Python]
    def (*@\textcolor{blue}{inference\_model}@*)(x, phi):
        # x: (B, L, x_dim) this is for time [t_0, t_{n-1}]
        x = linear(x) # (B,L,H) encoding to H

        outputs = []
        outputs.append(x)
        for i in range(phi.num_layers1):
            x = linear(x) #  (B, L, H) -> (B, L, 2H)
            outputs.append(x)
            
        for i in range(phi.num_layers2):
            x = LS4_inf_block_multi(x, *phi.LS4_params) 
                    # (B, L, H) -> (B, L, H) multi-channel SSMs
            z = ResBlock(z) # this is a general 1 layer residual block
        x = x + outputs.pop()

        for i in range(phi.num_layers1):
            x = x + outputs.pop()
            outputs.append(x)
            x = linear(x)  #  (B, L, 2H) -> (B, L, H)
            for i in range(phi.num_layers2):
                x = LS4_inf_block_multi(x, *phi.LS4_params) 
                        # (B, L, H) -> (B, L, H) multi-channel SSMs
                z = ResBlock(z) # this is a general 1 layer residual block
                
            x = x + outputs.pop()
        x = layernorm(x)
        x = linear(x) # (B,L,x_dim)
        mu_z = LS4_inf_block_multi(x, *phi.LS4_params) # (B,L,x_dim)
        sigma_z = LS4_inf_block_multi(x, *phi.LS4_params) 
                                                # (B,L,x_dim)
        return mu_z, sigma_z
\end{lstlisting}

\section{Experiments} \label{app:exp}

For all experiments we use AdamW optimizer with learning rate $0.001$. We use batch size $64$ and train for $7000$ epochs for FRED-MD, NN5 Daily, and Solar Weekly, $1000$ epochs for Temperature Rain, and $500$ epochs for Physionet and USHCN. The datasets are split into $80\%$ training data and $20\%$ testing data.

\subsection{MONASH Forecasting Repository} \label{app:monash}
\mypara{Data.} For all selected MONASH data, FRED-MD, NN5 Daily, and Solar Weekly are normalized per sequence such that each trajectory is centered at its own mean and normally distributed. We make this choice from the observation that for some datasets such as NN5 Daily the min and max can vary significantly across different data points such that normalizing sequences with dataset-wise statistics makes it difficult to learn the temporal dynamics, which would be on a widely different range. For Temperature Rain we squash each sequence into $[0, 1]$. This is due to the fact that the dataset is always positive and lands mostly around $x$-axis with sharp spikes in between. For the former 3 datasets, we do not use output activation while for the last, we use sigmoid as our activation.

\mypara{Hyperparameters.} For all MONASH experiments, we use AdamW optimizer with learning rate 0.001 and no weight decay. For each of prior/generative/inference model, we use 4 stacks for each for loop in the pseudocode. For each LS4 block, we use 64 as the dimension of $\rvh_t$ and 64 SSM channels in parallel, same as used in S4 and SaShiMi. Each residual block consists of 2 linear layers with skip connection at the output level where the first linear layer has 2 times output size as the input size and the second layer squeezes it back to the input size of the residual block. We generally find 5-dimensional latent space gives better performance than 1, and so uses this setting throughout. We also employ EMA for model weights and use 0.999 as the lambda value, but we do not find this choice crucial. We also use 0.1 as the standard deviation for the observation as this gives better ELBO than other choices we experimented with such as $1, 0.5, 0.01$. For baselines, we reuse the code from official repo and follow their suggestions for training. To keep representation power similar, we use the same size for the latent space (for latent variable models) and the same output standard deviation for ELBO evaluation.

\mypara{Evaluation.} For generation evaluation. The classification model and the prediction model uses a linear encoder and linear decoder with a single S4 layer in between. The S4 layer uses 16 hidden state dimensions. For classification model, encoder maps data dimension to 16 hidden state dimension, and averages over the sequence output from S4 layer before feeding into decoder that outputs logit for binary classification. We use cross entropy loss. For prediction model, we use the same linear encoder and a decoder that maps 16 hidden dimension to data dimension. We predict $k=10$ steps into the future. The evaluation models are trained using AdamW with $0.01$ learning rate for $100$ epochs with batch size $128$. We generate samples equal to the number of testing data, which together are used to train the two models.

\mypara{Additional discussion.} We also briefly discuss the surprising result that SaShiMi does not perform as well on general time-series generation. We speculate that not using a quantization scheme  to define discrete output conditionals, as standard in autoregressive models for e.g., audio and images, is the cause behind this drop in performance. \model{} does not requires quantization and sets best performance with a simple Gaussian conditional on the data space.

\mypara{Additional comparisons.} We additionally compare with two more relevant baselines \citep{fabius2014variational} and \citep{li2020scalable} present result in \Tabref{tab:add-gen}. 
We note that Latent SDE has an abnormally high classification score for Temperature Rain data, and demonstrate that this is when the classification score is not reliable. Upon visually examining generated results for Latent SDE (\Figref{fig:lsde-gen}) compared to ground-truths (\Figref{fig:tr-gt}), one can observe that the variation is extremely noisy around the x-axis and that the selected classifier is not powerful enough to capture the distinction from real data due to the considerable noise that exists in both generated and real data, resulting in high classification score. Marginal and predictive scores are much worse in comparison and are more indicative of generation quality.

\begin{table}[h]
\centering
\footnotesize
    \begin{tabular}{ll|cccccccc}
    \toprule
    Data & Metric & VRNN & Latent SDE  & \textbf{\model{} (Ours)} \\
    \midrule
    FRED-MD & Marginal $\downarrow$ & 0.165 & 0.122 &  \cellcolor{blue!10}\textbf{0.0221} \\
    & Class. $\uparrow$ & 0.000970 & \cellcolor{blue!10}\textbf{0.687} & 0.544 \\
    & Prediction $\downarrow$ & 0.371 & 1.62  & \cellcolor{blue!10}\textbf{0.0373}\\
    \midrule
    NN5 Daily & Marginal $\downarrow$ & 0.151  & 0.125 &  \cellcolor{blue!10}\textbf{0.00671}\\
    & Class. $\uparrow$ & 0.00176 &  0.601 & \cellcolor{blue!10}\textbf{0.636}  \\
    & Prediction $\downarrow$ & 1.22 &  0.957 & \cellcolor{blue!10}\textbf{0.241} \\
    \midrule 
    Temp Rain & Marginal $\downarrow$ & 1.20 & 0.999 &  \cellcolor{blue!10}\textbf{0.0834} \\
    & Class. $\uparrow$& 0.479 & \cellcolor{blue!10}\textbf{14.534} & 0.976 \\
    & Prediction $\downarrow$ & 0.864 & 1.798 & \cellcolor{blue!10}\textbf{0.521}  \\
    \midrule 
    Solar Weekly & Marginal $\downarrow$& 0.297 & 0.234 &\cellcolor{blue!10}\textbf{0.0459} \\
    & Class. $\uparrow$ & 0.00164 & 0.764  & \cellcolor{blue!10} \textbf{0.683}  \\
    & Prediction $\downarrow$ & 0.964 & 1.01 & \cellcolor{blue!10}\textbf{0.141}\\
    \bottomrule 
    \end{tabular}
%

\caption{Additional generation results on FRED-MD, NN5 Daily, Temperature Rain, and Solar Weekly.}\label{tab:add-gen}
\end{table}

\begin{figure}[h]
    \centering
    \includegraphics[width=\linewidth]{./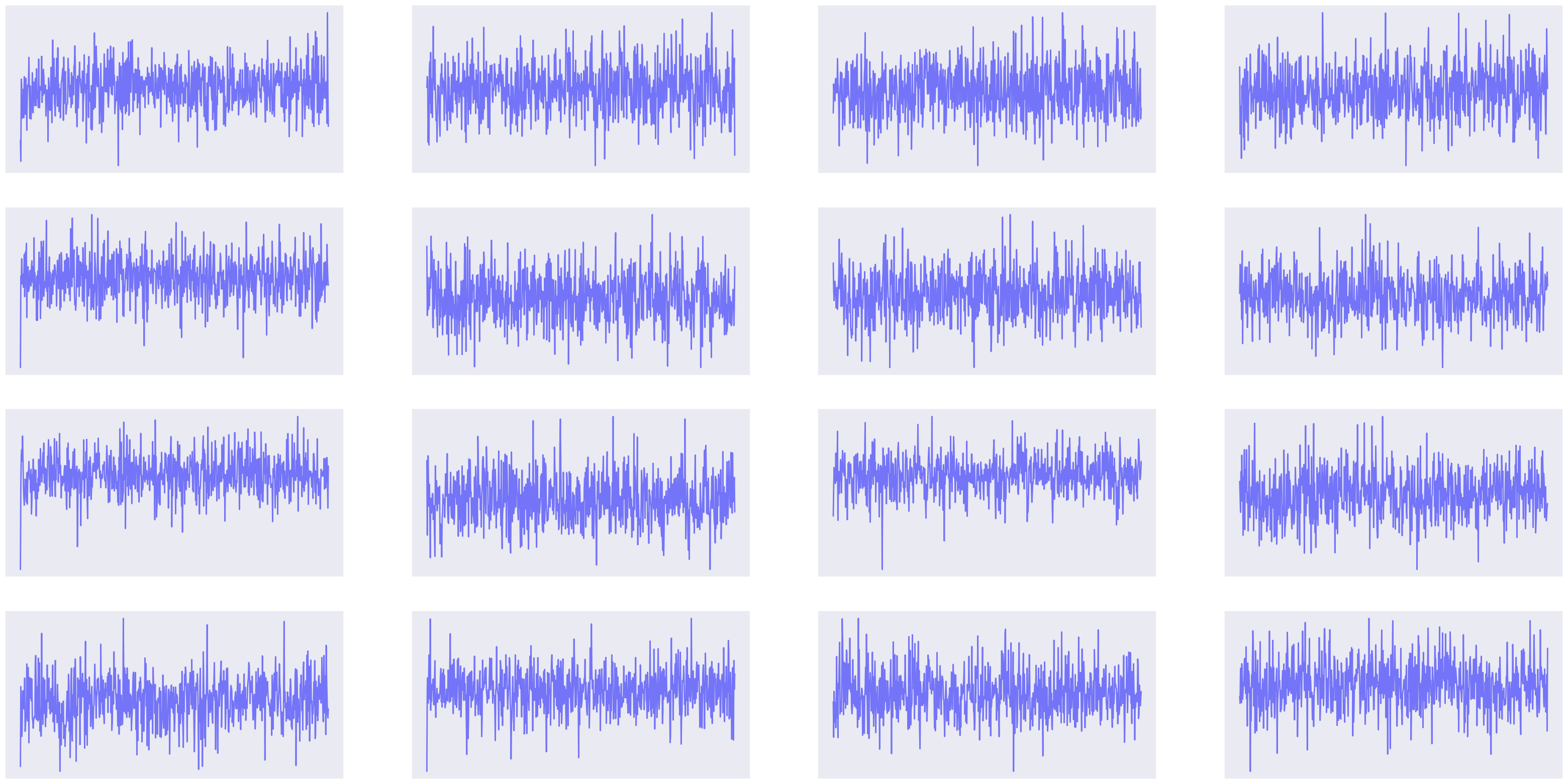}
    \caption{Latent SDE generation on Temperature Rain.}\label{fig:lsde-gen}
\end{figure}

\subsection{Physionet \& USHCN}
We follow the code provided by \citet{rubanova2019latent} to process Physionet and follow the code provided by \citet{de2019gru} for USHCN. For Physionet we do not use any activation to constrain the output space, and for USHCN, we use sigmoid activation for output.

We present the variational lowerbound results in \Tabref{tab:vlb}.
\begin{table}[]
    \centering

    \begin{tabular}{ll|cccccccc}
    \toprule
    Task  & Data &  RNN-VAE  & Latent ODE & \textbf{\model{} (Ours)}  & \textbf{\model{}\textsuperscript{IWAE} (Ours)}  \\
    \midrule
    \multirow{2}{*}{Interp.} & Physionet &   -412.8  &   -410.3   &  -669.0 & \cellcolor{blue!10}\textbf{-684.3}\\
    &  USHCN & -244.9   & -251.0   & -312.2 & \cellcolor{blue!10}\textbf{ -315.6}\\
    
    \midrule
    \multirow{2}{*}{Extrap.} & Physionet &-220.2  &   -168.5   & -250.2 & \cellcolor{blue!10}\textbf{-288.7}\\
    &  USHCN & -113.3  &  -110.3& -194.4 & \cellcolor{blue!10}\textbf{-211.8}\\
    \midrule
    \bottomrule 
    \end{tabular}
    \caption{ELBO comparisons with VAE-based models.}
    \label{tab:vlb}
\end{table}

\mypara{Hyperparameters.} In general we keep the hyperparameter choices the same as in MONASH, and we describe a few differences for these 2 datasets. For USHCN, we use 10 as the dimension for latent space, same as in \citet{rubanova2019latent} and we use Sigmoid as the output activation with output standard deviation 0.01. For Physionet, we use no output activation and 0.05 standard deviation, and use 20-dimensional latent space, same as in baselines.

\mypara{Additional Metric} We also present CRPS (continuous ranked probability score) as a more appropriate metric for time-series forecasting. With the same baselines, we show CRPS result in \Tabref{tab:crpsinterpextrap}.

\begin{table}[h]
\footnotesize
\resizebox{\linewidth}{!}{
    \begin{tabular}{ll|ccccccc}
    \toprule
    Task  & Data & RNN & RNN-VAE & ODE-RNN & GRU-D & Latent ODE & \textbf{\model{} (Ours)}  \\
    \midrule
    \multirow{2}{*}{Interp.} & Physionet &  2.09   &  5.59  & 2.40 & 2.71   & 6.16 & \cellcolor{blue!10}\textbf{1.25}\\
    &  USHCN &  3.33   &  4.68 & 3.18 & 4.69 & 4.68 &  \cellcolor{blue!10}\textbf{0.438} \\
    
    \midrule
    \multirow{2}{*}{Extrap.} & Physionet &  3.30   &  2.17 &  \cellcolor{blue!10}\textbf{2.16} &   13.9 &  2.43 &  2.36\\
    &  USHCN &   72.1  &  5.01 &  5.09 &  5.01  & 5.04  &  \cellcolor{blue!10}\textbf{2.76}\\
    \midrule
    \bottomrule 
    \end{tabular}
}
\vspace{-2mm}
\caption{Interpolation and extrapolation CRPS $(\times 10^{-2})$ scores. Lower scores are better.}\label{tab:crpsinterpextrap}
\vspace{-2mm}
\end{table}

\subsection{Runtime} \label{app:runtime}

We test all models on a single RTX A5000 GPU. To set up the dataset, we need to fully populate the GPU for each dataset during training for our benchmarking. For sequence lengths $\{80, 320, 1280, 5120, 20480\}$, we build dataset of length $\{102400, 25600, 6400, 1600, 400\}$ each with batch size $\{1024, 256, 64, 16, 4\}$ so that for each dataset the models are trained with $100$ iterations.

\section{Generation Results}

We present ground-truths and generations on the two hardest selected datasets, NN5 Daily and Temperature Rain because these are the hardest to model.

\begin{figure}[h]
    \centering
    \includegraphics[width=\linewidth]{./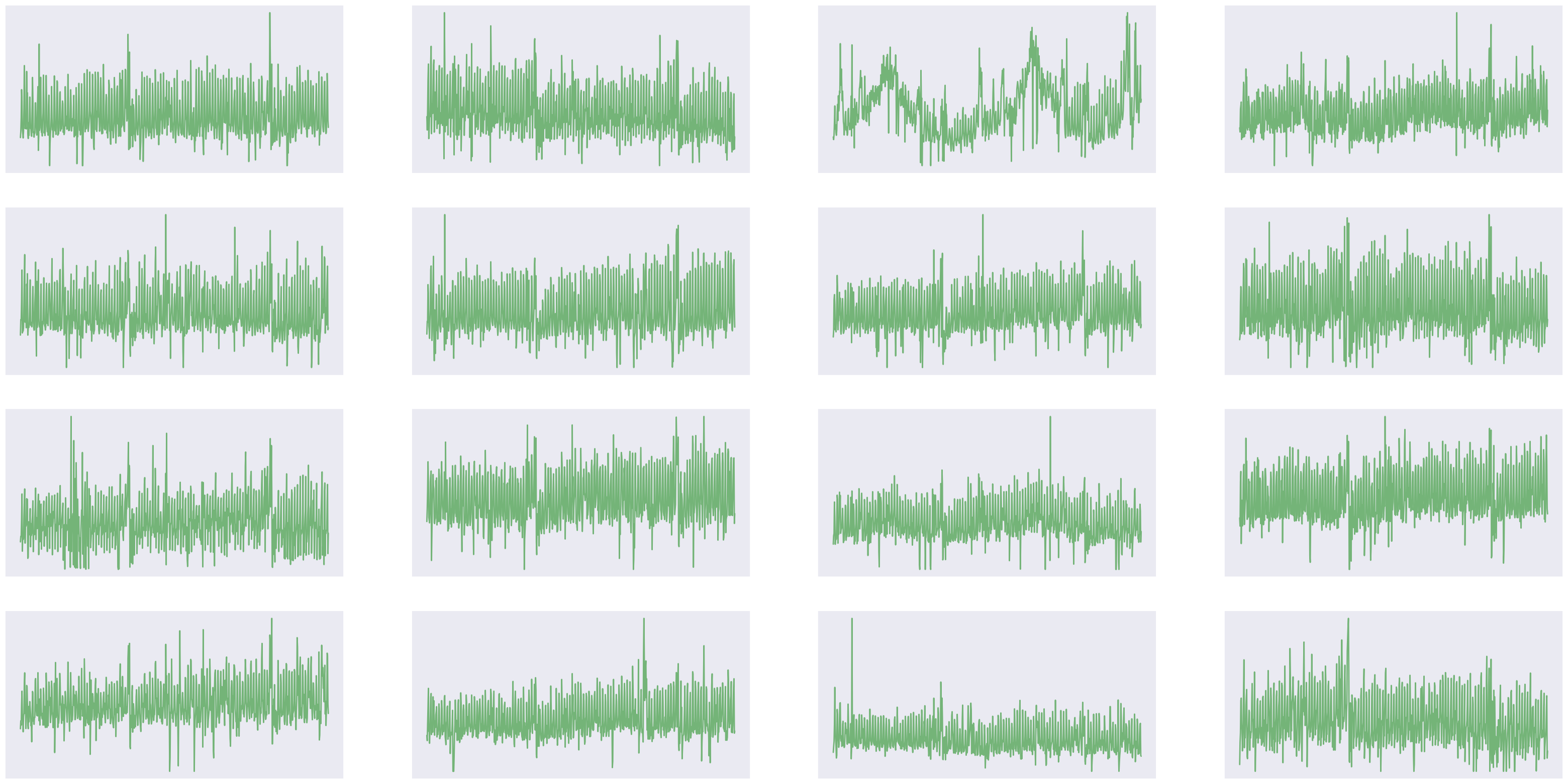}
    \caption{Normalized NN5 Daily data.}
\end{figure}

\begin{figure}[h]
    \centering
    \includegraphics[width=\linewidth]{./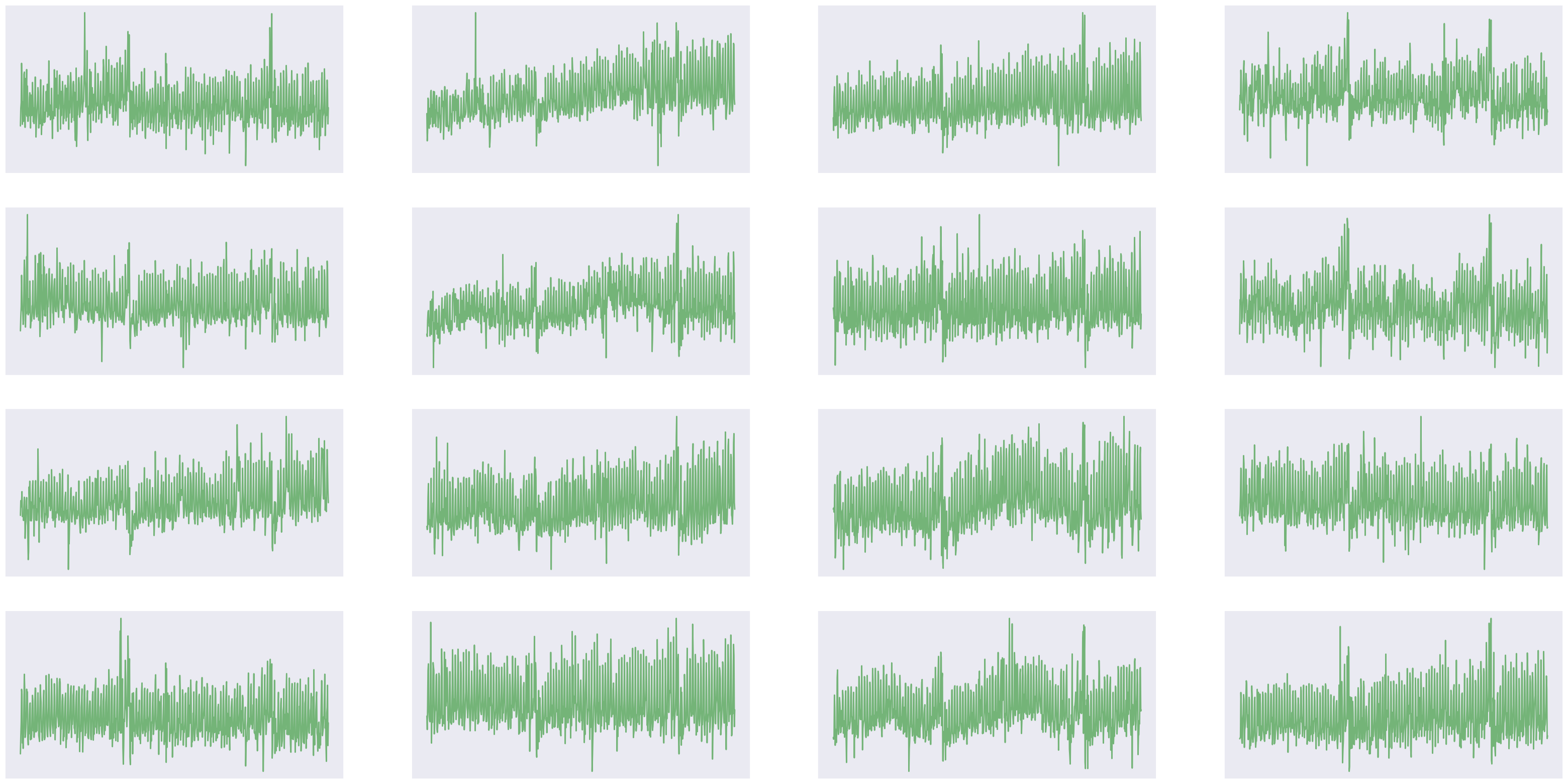}
    \caption{NN5 Daily generation.}
\end{figure}

\begin{figure}[h]
    \centering
    \includegraphics[width=\linewidth]{./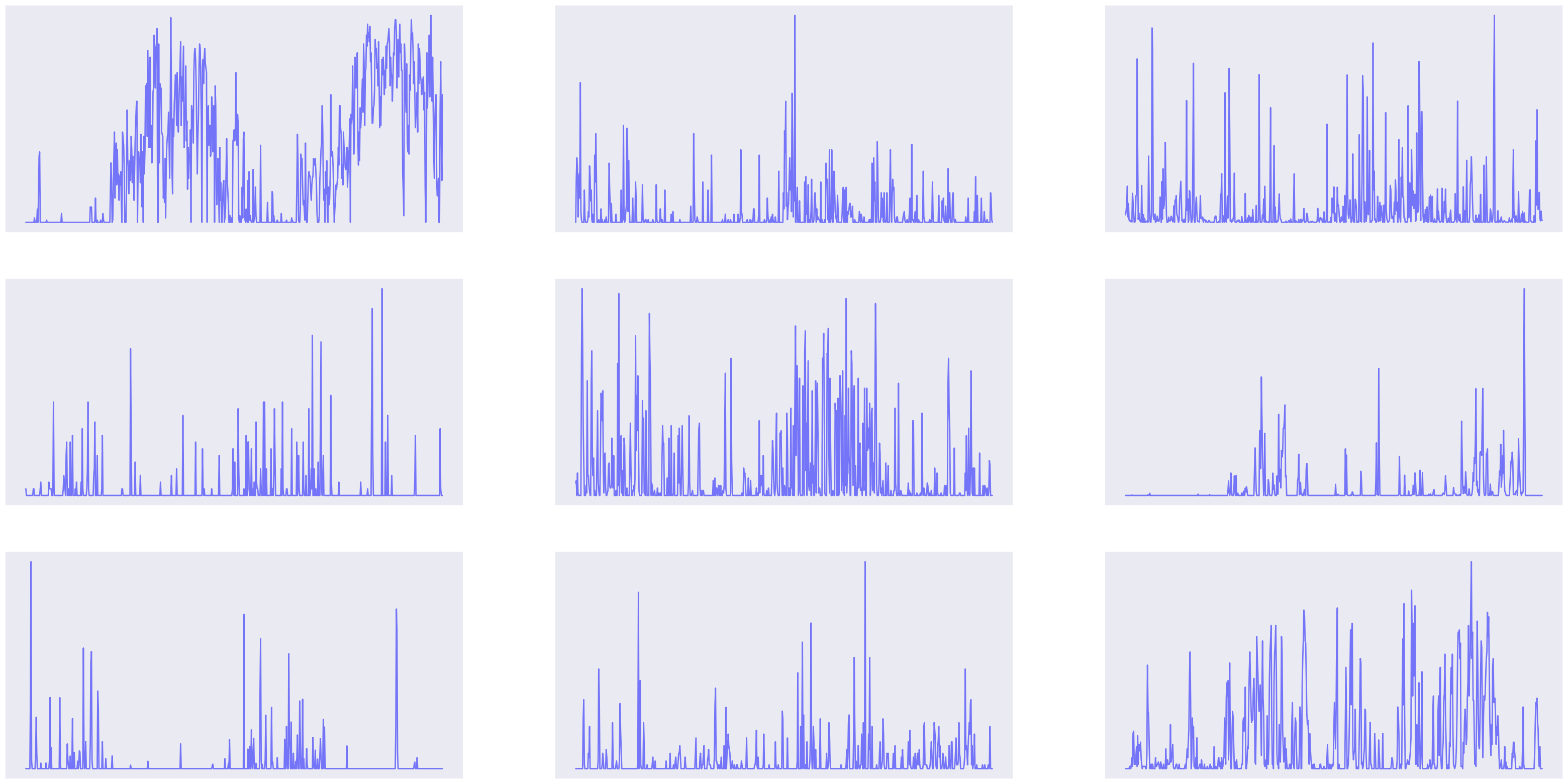}
    \caption{Normalized Temperature Rain data.}\label{fig:tr-gt}
\end{figure}

\begin{figure}[h]
    \centering
    \includegraphics[width=\linewidth]{./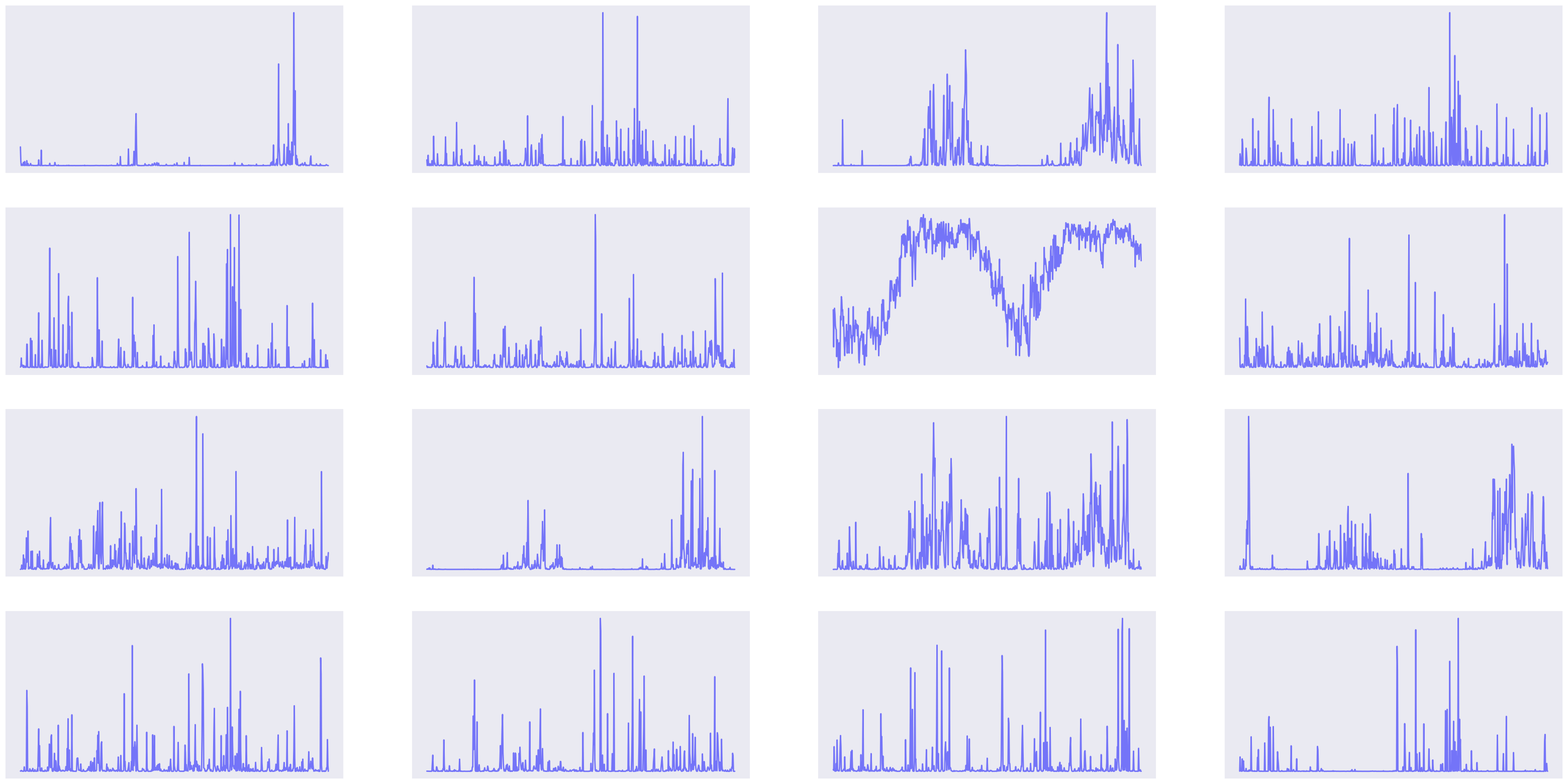}
    \caption{Temperature Rain generation.}
\end{figure}

\end{document}